\newif\ifuc
\newif\ifarxiv
\theoremstyle{definition}
\newtheorem{theorem}{Theorem}
\newtheorem{lemma}[theorem]{Lemma}
\newtheorem{corollary}[theorem]{Corollary}
\newtheorem{remark}[theorem]{Remark}
\newenvironment{example}
  {\pushQED{\qed}\examplex}
  {\popQED\endexamplex}
\definecolor{darkred}{rgb}{.7,0,0}
\definecolor{darkgreen}{rgb}{0,.5,0}
\definecolor{darkblue}{rgb}{0,0,.8}
\definecolor{darkcyan}{rgb}{0,0.6,.6}
\definecolor{darkorange}{rgb}{.8,.4,0}
\definecolor{gray}{rgb}{.4,.4,.4}
\newcommand{\todo}[1]{\textcolor{darkorange}{(\emph{TODO: #1})}}
\newcommand{\comment}[1]{\textcolor{darkblue}{(\emph{#1})}}
\newcommand{\warning}[1]{\textcolor{red}{(\emph{WARNING: #1})}}
\newcommand{\quest}[1]{\textcolor{darkgreen}{(\emph{Q: #1})}}
\newcommand{\XXX}{\textcolor{red}{\textbf{XXX}}}
\newcommand{\todo}[1]{}
\newcommand{\comment}[1]{}
\newcommand{\warning}[1]{}
\newcommand{\quest}[1]{}
\newcommand{\XXX}{}
\newcommand{\Naturals}{\mathbb{N}}
\newcommand{\Reals}{\mathbb{R}}
\newcommand{\ie}{\emph{i.e.}, }
\newcommand{\eg}{\emph{e.g.}, }
\newcommand{\eps}{\varepsilon}
\newcommand{\indicator}[1]{\llbracket #1 \rrbracket}
\newcommand{\ceiling}[1]{\left\lceil#1\right\rceil}
\newcommand{\sgn}{\text{sgn}}
\newcounter{alphoversetcount}
\newcommand{\alphnextref}{\stepcounter{alphoversetcount}\text{(\alph{alphoversetcount})}}
\newcommand{\alphoverset}[1]{\overset{\alphnextref{}}{#1}}
\newcommand{\resetalph}{\setcounter{alphoversetcount}{0}}
\definecolor{dkblue}{cmyk}{1,.54,.04,.19}
\definecolor{dkgray}{rgb}{0.3,0.3,0.3}
\def\v{\boldsymbol}             
\newcommand{\nlayers}{\ell}
\newcommand{\wmax}{w_{\text{max}}}
\newcommand{\activ}{\sigma}
\newcommand{\activvec}{\v{\sigma}}
\newcommand{\nvec}{\mathbf{n}}
\newcommand{\inputset}{\mathcal{X}}
\newcommand{\xmax}{x_{\text{max}}}
\newcommand{\Fmax}{F_{\text{max}}}
\newcommand{\nmax}{n_{\text{max}}}
\newcommand{\lambdamax}{\lambda_{\text{max}}}
\newcommand{\msim}{m}
\newcommand{\Msim}{M} 
\newcommand{\mask}{b}
\newcommand{\vmask}{\boldsymbol{\mask}}
\newcommand{\archi}{A(\nlayers, \nvec, \activvec)}
\newcommand{\Fnn}{F}
\newcommand{\fnn}{f}
\newcommand{\WF}{W^*}
\newcommand{\wF}{w^*}
\newcommand{\NF}{N^*} 
\title{Logarithmic Pruning is All You Need}
\author{%
Laurent Orseau
\and
Marcus Hutter
\and
Omar Rivasplata
}
\date{
\texttt{\{lorseau,mhutter,rivasplata\}@google.com} \\
DeepMind, London, UK \\
June 11, 2020
}
\author{%
Laurent Orseau \\ DeepMind, London, UK \\ \texttt{lorseau@google.com}
\And
Marcus Hutter \\ DeepMind, London, UK \\ \texttt{www.hutter1.net}
\And
Omar Rivasplata \\ DeepMind, London, UK \\ \texttt{rivasplata@google.com}
}
\begin{document}
\maketitle
\begin{abstract}
The Lottery Ticket Hypothesis is a conjecture that every large neural network contains a subnetwork that, when trained in isolation, achieves comparable performance to the large network.
An even stronger conjecture has been proven recently: Every sufficiently overparameterized network contains a subnetwork that, at random initialization, but without training, achieves comparable accuracy to the trained large network.
This latter result, however, relies on a number of strong assumptions and guarantees a polynomial factor on the size of the large network compared to the target function.
In this work, we remove the most limiting assumptions of this previous work while providing significantly tighter bounds: 
the overparameterized network only needs a logarithmic factor (in all variables but depth) number of neurons per weight of the target subnetwork.
\end{abstract}

\comment{Keywords: neural networks, lottery ticket hypothesis, pruning, 
NeurIPS: Deep Learning -> Analysis and Understanding of Deep Networks, ...}

\todo{Global ToDos: See more in text:
For submission:
\\
For final accepted version:
\begin{itemize}
    \item M: Standard error propagation of bounded Lipschitz functions should give OverallError ≤  $ε×$ Number of Operations, which is $\ell×n^2$ in our case,
so maybe only $\log(\ell)$ for logistic activation?
    \item The results may hold even if the target network is not fully connected if carefully expressing everying in terms of number of weights per layer.
    \item Remove $k$ or $k'$ in place of the other. They differ only by $≤1$.
    At least inverse the names as $k'$ is currently used more often.
    (also note that $k= \ceiling{k-1}$)
\end{itemize}}

\section{Introduction}

\warning{Binary mask vector: Check if correct usage! (vector or matrix?)}

The recent success of neural network (NN) models in a variety of tasks, ranging from vision~\citep{khan2020convnet} to speech synthesis~\citep{oord2016wavenet} to playing games~\citep{schrittwieser2019mastering,ebendt2009weighted}, has sparked a number of works aiming to understand how and why they work so well.
Proving theoretical properties for neural networks is quite a difficult task, with challenges due to the intricate composition of the functions they implement and the high-dimensional regimes of their training dynamics. 
The field is vibrant but still in its infancy, many theoretical tools are yet to be built to provide guarantees on what and how NNs can learn.
A lot of progress has been made towards understanding the convergence properties of NNs (see \eg \cite{allen-zhu2019convergence}, \cite{zou2019improved} and references therein).
The fact remains that training and deploying deep NNs has a large cost \citep{livni2014}, which is problematic.
To avoid this problem, one could stick to a small network size.
However, it is becoming evident that there are benefits to using oversized networks, as the literature on overparametrized models \citep{belkin-late2018power} points out.
Another solution, commonly used in practice, is to prune a trained network to reduce the size and hence the cost of prediction/deployment. While missing theoretical guarantees, experimental works show that pruning can considerably reduce the network size without sacrificing accuracy.

The influential work of \citet{frankle2018lottery} has pointed out the following observation:
a) train a large network for long enough and observe its performance on a dataset,
b) prune it substantially to reveal a much smaller subnetwork with good (or better) performance,
c) reset the weights of the subnetwork to their original values and remove the rest of the weights, and
d) retrain the subnetwork in isolation; then the subnetwork reaches the same test performance as the large network, and trains faster.
\citet{frankle2018lottery} thus conjecture that every successfully trained network contains a much smaller subnetwork that, when trained in isolation, has comparable performance to the large network, without sacrificing computing time.
They name this phenomenon the Lottery Ticket Hypothesis, and a `winning ticket' 
is a subnetwork of the kind just described.

\citet{ramanujan2019s} went even further by observing that if the network architecture is large enough,
then it contains a smaller network that, \emph{even without any training}, has comparable accuracy to the trained large network.
They support their claim with empirical results using a new pruning algorithm,
and even provide a simple asymptotic justification
that we rephrase here:
Starting from the inputs and progressing toward the outputs,
for any neuron of the target network, sample as many neurons as required until one 
calculates a function within small error of the target neuron;
then, after pruning the unnecessary neurons, the newly generated network will be within some small error of the target network. Interestingly, \cite{ulyanov2018} pointed out that randomly initialized but untrained ConvNets already encode a great deal of the image statistics required for restoration tasks such as de-noising and inpainting, and the only prior information needed to do them well seems to be contained in the network structure itself, since no part of the network was learned from data. 

Very recently, building upon the work of \citet{ramanujan2019s},
\citet{malach2020proving} proved a significantly stronger version of 
the ``pruning is all you need'' conjecture,
moving away from asymptotic results to non-asymptotic ones:
With high probability, any target network of $\nlayers$ layers and $n$ neurons per layer
can be approximated within $\eps$ accuracy
by pruning a larger network whose size is polynomial in the size of the target network.
To prove their bounds,
\citet{malach2020proving} make assumptions about the norms of the inputs and of the weights.
This polynomial bound already tells us that unpruned networks contain many `winning tickets' even
without training. Then it is natural to ask: could the most important task of gradient descent be pruning?

%
%
%
Building on top of these previous works, we aim at providing stronger theoretical guarantees still based on the motto that ``pruning is all you need'' but hoping to provide further insights into how `winning tickets' may be found.
In this work we relax the aforementioned assumptions while greatly strengthening the theoretical guarantees by improving from polynomial to logarithmic order in all variables except the depth,
for the number of samples required to approximate one target weight.

\paragraph{How this paper is organized.}
After some notation (\cref{sec:notations}) and the description of the problem (\cref{sec:obj}),
we provide a general approximation propagation lemma (\cref{sec:approx_propag}),
which shows the effect of the different variables on the required accuracy.
Next, we show how to construct the large, fully-connected ReLU network in \cref{sec:constructG}
identical to \citet{malach2020proving}, except that weights are sampled
from a hyperbolic weight distribution instead of a uniform one.
We then give our theoretical results in \cref{sec:relu},
showing that only $\tilde{O}(\log(\nlayers\nmax/\eps))$ neurons per target weight
are required under some similar conditions as the previous work
(with $\nlayers$ layers, $\nmax$ neurons per layer and $\eps$ accuracy)
or $\tilde{O}(\nlayers\log(\nmax/\eps))$
(with some other dependencies inside the log)
if these conditions are relaxed.
For completeness, the most important technical result is included in \cref{sec:random_continuous}.
Other technical results, a table of notation, and further ideas can be found in \cref{sec:technical}.

\section{Notation and definitions}\label{sec:notations}

A network architecture $\archi$ is described by a positive integer $\nlayers$ corresponding to the number of fully connected feed-forward layers, and a list of positive integers $\nvec = (n_0,n_1,\ldots,n_\nlayers)$ corresponding to the profile of widths, where $n_i$ is the number of neurons in layer $i \in [\nlayers] = \{ 1,\ldots,\nlayers \}$ and $n_0$ is the input dimension, and a list of activation functions $\activvec = (\activ_1, \ldots, \activ_\nlayers)$---all neurons in layer $i$ use the activation function $\activ_i$.
Networks from the architecture $\archi$ implement functions from $\Reals^{n_0}$ to $\Reals^{n_\nlayers}$ that are obtained by successive compositions:
$
\Reals^{n_0} \longrightarrow \Reals^{n_1}
\longrightarrow \ \cdots 
\longrightarrow \Reals^{n_\nlayers}\,.
$


Let $\Fnn$ be a \emph{target network} from architecture $\archi$. The composition of such $\Fnn$ is as follows:
Each layer $i\in[\nlayers]$ has a matrix $\WF_i \in [-\wmax, \wmax]^{n_i\times n_{i-1}}$ of connection weights,
and an activation function $\activ_i$,
such as tanh, the logistic sigmoid, ReLU, Heaviside, etc.
The network takes as input a vector $x\in\inputset \subset \Reals^{n_0}$ where for example
$\inputset=\{-1,1\}^{n_0}$ or $\inputset=[0, \xmax]^{n_0}$, etc. 
In layer $i$, the neuron $j$
with in-coming weights $\WF_{i, j}$ calculates 
$\fnn_{i,j}(y) =\activ_i(\WF_{i,j} y)$,
where $y\in\Reals^{n_{i-1}}$ is usually the output of the previous layer. 
Note that $\WF_{i, j}$ is the $j$-th row of the matrix $\WF_{i}$.
The vector ${\fnn_i}(y) = [\fnn_{i,1}(y), \ldots, \fnn_{i,n_i}(y)]^\top \in\Reals^{n_i}$ denotes the output of the whole layer $i$ when it receives $y \in \Reals^{n_{i-1}}$ from the previous layer. 
Furthermore, for a given \emph{network input} $x\in\inputset$
we recursively define $\Fnn_{i}(x)$ by setting $\Fnn_0(x) = x$, and for $i \in [l]$ then $\Fnn_{i}(x) = {\fnn_i}(\Fnn_{i-1}(x))$. 
The output of neuron $j \in [n_{i}]$ in layer $i$
is
$\Fnn_{i,j}(x) = \fnn_{i, j}(\Fnn_{i-1}(x))$.
The \emph{network output} is $\Fnn(x) = \Fnn_\nlayers(x)$.

For an activation function $\activ(.)$, let $\lambda$ be its Lipschitz factor (when it exists), 
that is, $\lambda$ is the smallest real number such that $|\activ(x) - \activ(y)| \leq \lambda|x-y|$ for all $(x, y) \in \Reals^2$.
For ReLU and tanh we have $\lambda=1$, and for the logistic sigmoid, $\lambda=1/4$.
Let $\lambda_i$ be the $\lambda$ corresponding to the activation function $\activ_i$ of all the neurons in layer $i$,
and let $\lambdamax = \max_{i\in[\nlayers]}\lambda_i$.

Define $\nmax = \max_{i\in[0..\nlayers]} n_i$ to be the maximum number of neurons per layer.
The total number of connection weights in the architecture $\archi$ is denoted $\NF$, and we have $\NF\leq \nlayers\nmax^2$.

For all $x\in\inputset$, let $\Fmax(x) = \max_{i\in[\nlayers]} \max_{j\in[n_{i-1}]} |\Fnn_{i-1, j}(x)|$ be the maximum activation at any layer of a target network $\Fnn$, including the network inputs but excluding the network outputs.
We also write $\Fmax(\inputset) = \sup_{x\in\inputset}\Fmax(x)$;
when $\inputset$ is restricted to the set of inputs of interest (not necessarily the set of all possible inputs) such as a particular dataset, 
we expect $\Fmax(\inputset)$ to be bounded by a small constant in most if not all cases.
For example, $\Fmax(\inputset)\leq1$ for a neural network with only sigmoid activations and inputs in $[-1, 1]^{n_0}$. 
For ReLU activations, $\Fmax(\inputset)$ can in principle grow as fast as $(\nmax\wmax)^\nlayers$, but since networks with sigmoid activations are universal approximators, we expect that for all functions that can be approximated with a sigmoid network there is a ReLU network calculating the same function with $\Fmax(\inputset) = O(1)$.

The \emph{large network} $G$ has an architecture $A(\nlayers',\nvec', \activvec')$, 
possibly wider and deeper than the target network $\Fnn$.
The \emph{pruned network} $\hat{G}$ is obtained by pruning (setting to 0) many weights
of the large network $G$.
For each layer $i\in[\nlayers']$, and each pair of neurons $j_1\in[n_i]$ and $j_2\in[n_{i-1}]$,
for the weight $w_{i, j_1, j_2}$ of the large network $G$
there is a corresponding mask $\mask_{i, j_1, j_2}\in\{0, 1\}$ such that
the weight of the pruned network $\hat{G}$ is $w'_{i, j_1, j_2} = \mask_{i, j_1, j_2}w_{i, j_1, j_2}$.
\warning{$w'$ never used anymore?}
The pruned network $\hat{G}$ will have a different architecture from $\Fnn$,
but at a higher level (by grouping some neurons together) it will have the same `virtual' architecture,
with virtual weights $\hat{W}$.
As in previous theoretical work, we consider an `oracle' pruning procedure, as our objective is to understand the limitations of even the best pruning procedures.

For a matrix $M\in[-c, c]^{n\times m}$,
we denote by 
$\|M\|_2$ its spectral norm, equal to its largest singular value,
and its max-norm is $\|M\|_{\max} = \max_{i,j} |M_{i, j}|$.
In particular, for a vector $v$,
we have $\|Mv\|_2\leq \|M\|_2\|v\|_2$ and 
$\|M\|_{\max} \leq \|M\|_2 \leq{\sqrt{nm}\|M\|_{\max}}$
and also $\|M\|_{\max} \leq c$.
This means for example that $\|M\|_2 \leq 1$ 
is a stronger condition than $\|M\|_{\max}\leq 1$.

\section{Objective}
\label{sec:obj}

\textbf{Objective:} Given an architecture $\archi$ and accuracy $\epsilon > 0$,
construct a network $G$ from some larger architecture $A(\nlayers',\nvec', \activvec')$, 
such that if the weights of $G$ are randomly initialized  \emph{(no training)}, then
for any target network $\Fnn$ from $\archi$,
setting some of the weights of $G$ to 0 \emph{(pruning)} reveals a subnetwork $\hat{G}$ such that with high probability,
\begin{align*}
\sup_{x\in\inputset} \|\Fnn(x) - \hat{G}(x)\|_2 \leq \eps
\end{align*}
\textbf{Question:} How large must $G$ be to contain all such $\hat{G}$? More precisely, how many more neurons or how many more weights must $G$ have compared to $\Fnn$?

\citet{ramanujan2019s} were the first to provide a formal asymptotic argument proving that such a $G$ can indeed exist at all.
\citet{malach2020proving} went substantially further by providing the first polynomial bound on the size of $G$ compared to the size of the target network $\Fnn$.
To do so, they make the following assumptions on the target network: (i) the inputs $x\in\inputset$ must satisfy $\|x\|_2 \leq 1$, 
and at all layers $i\in[\nlayers]$: 
(ii) the weights must be bounded in $[-1/\sqrt{\nmax}, 1/\sqrt{\nmax}]$, 
(iii) they must satisfy $\|\WF_i\|_2 \leq 1$ at all layers $i$, and
(iv) the number of non-zero weights at layer $i$ must be less than $\nmax$: $\|\WF_i\|_0 \leq \nmax$.
Note that these constraints imply that $\Fmax(\inputset)\leq 1$.
Then under these conditions,
they prove that any ReLU network with $\nlayers$ layers and $\nmax$ neurons per layer
can be approximated%
\footnote{Note that even though their bounds are stated in the 1-norm, this is because they consider a single output---for multiple outputs their result holds in the 2-norm, which is what their proof uses.}
within $\eps$ accuracy with probability $1-\delta$
by pruning a network $G$ with $2\nlayers$ ReLU layers and each added intermediate layer has $\nmax^2\lceil \frac{64\nlayers^2\nmax^3}{\eps^2}\log\frac{2\nmax^2\nlayers}{\delta}\rceil$
neurons.
These assumptions are rather strong, as in general this forces the activation signal to decrease quickly
with the depth.
Relaxing these assumptions while using the same proof steps would make the bounds exponential in the number of layers.
We build upon the work of \citet{ramanujan2019s,malach2020proving},
who gave the first theoretical results on the Lottery Ticket Hypothesis, albeit under restrictive assumptions.
Our work re-uses some of their techniques to provide sharper bounds while removing these assumptions.



\section{Approximation Propagation}\label{sec:approx_propag}

In this section, we analyze how the approximation error
between two networks with the same architecture
propagates through the layers.
The following lemma is a generalization of the (end of the) proof of \citet[Theorem~A.6]{malach2020proving}
that removes their aforementioned assumptions 
and provides better insight into the impact of the different variables on the required accuracy, but is not sufficient in itself to obtain better bounds.
For two given networks with the same architecture,
it determines what accuracy is needed on each individual weight
so the outputs of the two neural networks differ by at most $\eps$ on any input.
Note that no randomization appears at this stage.

\begin{lemma}[Approximation propagation]
\label{thm:propagation}
Consider two networks $\Fnn$ and $\hat{G}$ with the same architecture $\archi$
with respective weight matrices $\WF$ and $\hat{W}$,
each weight being in $[-\wmax, \wmax]$.
Given $\eps>0$, 
if for each weight $\wF$ of $\Fnn$ the corresponding weight $\hat{w}$ of $\hat{G}$ 
we have $|\wF - \hat{w}| \leq \eps_w$,
and if 
\begin{align*}
    \eps_w \leq \eps\left/\left(e~\nlayers~\lambdamax~\nmax^{\nicefrac32}~\Fmax(\inputset)~
    \prod_{i=1}^\nlayers \max\{1, \lambda_i\|\hat{W}_{i}\|_2\}\right)\right.,
    \quad\text{then}\quad\sup_{x\in\inputset}\|\Fnn(x) - \hat{G}(x)\|_2 \leq \eps\,.
\end{align*}
\end{lemma}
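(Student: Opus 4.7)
The plan is a standard layerwise error propagation argument. Define the per-layer discrepancy $\Delta_i(x) = \Fnn_i(x) - \hat{G}_i(x)\in\Reals^{n_i}$, with $\Delta_0(x)=0$. Using that $\activ_i$ is $\lambda_i$-Lipschitz componentwise and that the $\ell_2$ norm of a vector of per-coordinate differences is bounded by $\lambda_i$ times the $\ell_2$ norm of the pre-activation differences, I would bound
\[
\|\Delta_i(x)\|_2 \;\leq\; \lambda_i\,\bigl\| \WF_i \Fnn_{i-1}(x) - \hat{W}_i \hat{G}_{i-1}(x)\bigr\|_2.
\]
The key algebraic step is to split the inside via the telescoping identity
\[
\WF_i \Fnn_{i-1}(x) - \hat{W}_i \hat{G}_{i-1}(x)
= (\WF_i - \hat{W}_i)\,\Fnn_{i-1}(x) + \hat{W}_i\,\Delta_{i-1}(x),
\]
where I bias the split toward the \emph{target} activations so that the factor multiplying $\WF_i-\hat W_i$ is controlled by $\Fmax(\inputset)$ (not by $\hat G$'s potentially larger activations).

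Next I would control each piece. Since the entries of $\WF_i - \hat W_i$ are bounded by $\eps_w$ in absolute value and the matrix has at most $\nmax\times\nmax$ entries, $\|\WF_i-\hat W_i\|_2 \leq \nmax \eps_w$. Also $\|\Fnn_{i-1}(x)\|_2 \leq \sqrt{\nmax}\,\Fmax(\inputset)$ by definition of $\Fmax$. Hence
\[
\|\Delta_i(x)\|_2 \;\leq\; \lambda_i\,\nmax^{3/2}\,\Fmax(\inputset)\,\eps_w \;+\; \lambda_i\|\hat W_i\|_2\,\|\Delta_{i-1}(x)\|_2.
\]
Setting $a_i := \lambda_i\,\nmax^{3/2}\,\Fmax(\inputset)\,\eps_w$ and $b_i := \lambda_i\|\hat W_i\|_2$, this is the affine recurrence $\|\Delta_i\|_2 \leq a_i + b_i\,\|\Delta_{i-1}\|_2$ with $\Delta_0 = 0$.

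Unrolling gives $\|\Delta_\nlayers(x)\|_2 \leq \sum_{i=1}^{\nlayers} a_i\prod_{j=i+1}^{\nlayers} b_j$. I would then upper-bound each partial product by the full product with the clamps $\max\{1,b_j\}$ (dropping a factor $\leq 1$ can only increase the product), obtaining
\[
\prod_{j=i+1}^{\nlayers} b_j \;\leq\; \prod_{j=1}^{\nlayers}\max\{1,\lambda_j\|\hat W_j\|_2\},
\]
and bounding each $a_i$ by $\lambdamax\nmax^{3/2}\Fmax(\inputset)\eps_w$. This collapses the sum to
\[
\|\Delta_\nlayers(x)\|_2 \;\leq\; \nlayers\,\lambdamax\,\nmax^{3/2}\,\Fmax(\inputset)\,\eps_w\,\prod_{i=1}^{\nlayers}\max\{1,\lambda_i\|\hat W_i\|_2\}.
\]
Substituting the hypothesized upper bound on $\eps_w$ yields $\|\Delta_\nlayers(x)\|_2 \leq \eps/e \leq \eps$, uniformly in $x\in\inputset$, which is the desired conclusion; the factor $e$ in the denominator is unused slack here but is presumably convenient later.

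The step I expect to be the most delicate is choosing the right telescoping split: using $\Fnn_{i-1}$ rather than $\hat G_{i-1}$ on the $(\WF_i-\hat W_i)$ side is what lets the $\Fmax(\inputset)$ factor appear (instead of a recursive bound involving $\|\hat G_{i-1}\|_2$, which would couple the two error sources and force an avoidable Grönwall-style blowup). The rest is routine norm accounting, with the only minor subtlety being the conversion from the entrywise bound $|\WF_{i,j,k}-\hat W_{i,j,k}|\leq\eps_w$ to a spectral-norm bound via $\|\cdot\|_2\leq\sqrt{n_in_{i-1}}\|\cdot\|_{\max}\leq\nmax\eps_w$.
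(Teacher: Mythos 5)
Your proof is correct and follows essentially the same route as the paper: the same layerwise recursion, the same split $(\WF_i-\hat W_i)\Fnn_{i-1}+\hat W_i\Delta_{i-1}$ biased toward the target activations, and the same norm bounds giving $\lambda_i\nmax^{3/2}\Fmax(\inputset)\eps_w$ for the additive term. The only (cosmetic) difference is that you unroll the affine recurrence directly, whereas the paper invokes its auxiliary sequence bound (\cref{lem:ax+b_protected}), which is where its factor $e$ comes from — your observation that this factor is unused slack is accurate.
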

\newcommand{\propagproof}{
\begin{proof}
For all $x\in\inputset$, for a layer $i$:
\begin{align*}\resetalph{}
    \|\Fnn_i(x) - \hat{G}_i(x)\|_2 
    &\alphoverset{=} \|\fnn_i(\Fnn_{i-1}(x)) - \hat{g}_i(\hat{G}_{i-1}(x))\|_2 \\
    &=
    \left[ \sum_{j\in [n_i]} (\activ_i({\WF_{i,j}} \Fnn_{i-1}(x)) 
            -\activ_i({\hat{W}_{i, j}} \hat{G}_{i-1}(x)))^2
    \right]^{1/2} \\
    &\alphoverset{\leq}
    \lambda_i
    \|{\WF_{i}} \Fnn_{i-1}(x) - {\hat{W}_{i}} \hat{G}_{i-1}(x)\|_2\\
    &\alphoverset{\leq}
    \lambda_i
    \|{\hat{W}_{i}} (\Fnn_{i-1}(x) - \hat{G}_{i-1}(x))\|_2 +
    \lambda_i
    \|({\WF_{i}} - {\hat{W}_{i}}) \Fnn_{i-1}(x)\|_2\\
    &\alphoverset{\leq}
    \lambda_i\|\hat{W}_{i}\|_2\,\|\Fnn_{i-1}(x) - \hat{G}_{i-1}(x)\|_2 + 
    \lambda_i\|{\WF_{i}} - {\hat{W}_{i}}\|_2\, \|\Fnn_{i-1}(x)\|_2 \\
    &\alphoverset{\leq}
    \lambda_i\|\hat{W}_{i}\|_2\|\Fnn_{i-1}(x) - \hat{G}_{i-1}(x)\|_2 + 
    \lambdamax\sqrt{n_in_{i-1}}\eps_w\wmax\sqrt{n_i}\Fmax(x) \\
    &\leq \lambda_i \|\hat{W}_{i}\|_2\|\Fnn_{i-1}(x) - \hat{G}_{i-1}(x)\|_2 + 
    \eps_w \lambdamax\nmax^{\nicefrac32}\Fmax(x) \\
    &\alphoverset{\leq}
    \eps_w ei\lambdamax \nmax^{\nicefrac32}\Fmax(x)
    \prod_{u=1}^i \max\{1, \lambda_u\|\hat{W}_{u}\|_2\}
\end{align*}\resetalph{}%
where
\alphnextref{} follows from the definition of $\Fnn_i$ and $\hat{G}_i$,
\alphnextref{} follows from $|\activ_i(x) - \activ_i(y)| \leq \lambda_i|x-y|$ by the definition of $\lambda_i$,
\alphnextref{} follows from the Minkowski inequality,
\alphnextref{} follows from $\|Mv\|_2 \leq \|M\|_2\|v\|_2$
applied to both terms,
\alphnextref{} is by assumption that $|\wF - \hat{w}|\leq \eps_w$
and $\|M\|_2 \leq c\sqrt{ab}$ for any $M\in[-c, c]^{a\times b}$,
and finally \alphnextref{} follows from \cref{lem:ax+b_protected}, using $\|\Fnn_0(x) - \hat{G}_0(x)\|_2 = 0$.
Therefore
\begin{align*}
    \|\Fnn(x) - \hat{G}(x)\|_2
    = \|\Fnn_\nlayers(x) - \hat{G}_\nlayers(x)\|_2 
    \leq 
    \eps_w e\nlayers\lambdamax \nmax^{\nicefrac32}\Fmax(x)
     \prod_{i=1}^\nlayers \max\left\{1,\lambda_i\|\hat{W}_{i}\|_2\right\}
\end{align*}
and taking $\eps_w$ as in the theorem statement proves the result.
\end{proof}
}
The proof is given in \cref{sec:technical}.
\begin{example}\label{ex:propag}
Consider an architecture with only ReLU activation function ($\lambda=1$),
weights in $[-1, 1]$
and assume that $\Fmax(\inputset) = 1$
and take the worst case $\|\hat{W}_i\|_2\leq\wmax \nmax =\nmax$,
then \cref{thm:propagation} tells us that 
the approximation error on each individual weight must be at most
    $\eps_w \leq \eps/(e\nlayers \nmax^{\nicefrac32+\nlayers})$
so as to guarantee that the approximation error between the two networks is at most $\eps$.
This is exponential in the number of layers.
If we assume instead that $\|\hat{W}_i\|_2\leq1$ as in previous work then this reduces to a mild polynomial dependency:
    $\eps_w \leq \eps/(e\nlayers \nmax^{\nicefrac32})$.
\end{example}

\section{Construction of the ReLU Network $G$ and Main Ideas}\label{sec:constructG}

\newcommand{\pwplus}{p_{w\scriptscriptstyle\geq0}}
\newcommand{\pw}{p_w} 
\newcommand{\pwprod}{p_{w\scriptscriptstyle\times}} 
\newcommand{\pab}{p^{\scriptscriptstyle+}}
\newcommand{\pcd}{p^{\scriptscriptstyle-}}
\newcommand{\zpin}{z^{\scriptscriptstyle+}_{\text{in}}}
\newcommand{\zpout}{z^{\scriptscriptstyle+}_{\text{out}}}
\newcommand{\zmin}{z^{\scriptscriptstyle-}_{\text{in}}}
\newcommand{\zmout}{z^{\scriptscriptstyle-}_{\text{out}}}
\newcommand{\wa}{{\zpout}{}}
\newcommand{\wb}{{\zpin}{}}
\newcommand{\wc}{{\zmout}{}}
\newcommand{\wD}{{\zmin}{}}

We now explain how to construct the large network $G$ given only the
architecture $\archi$, the accuracy $\eps$, and the domain $[-\wmax, \wmax]$ of the weights.
Apart from this, the target network $\Fnn$ is unknown.
In this section all activation functions are ReLU $\activ(x)=\max\{0, x\}$,
and thus $\lambda=1$.

We use a similar construction of the large network $G$ as \citet{malach2020proving}:
both the target network $\Fnn$ and the large network $G$ consist of fully connected ReLU layers,
but $G$ may be wider and deeper.
The weights of $\Fnn$ are in $[-\wmax, \wmax]$.
The weights for $G$ (at all layers) are all sampled from the same distribution, the only difference with the previous work is the distribution of the weights: we use a hyperbolic distribution instead of a uniform one.

Between layer $i-1$ and $i$ of the target architecture, 
for the large network $G$
we insert an intermediate layer $i-\nicefrac12$ of ReLU neurons.
Layer $i-1$ is fully connected to layer $i-\nicefrac12$ which is fully connected to layer $i$.
By contrast to the target network $\Fnn$, in $G$ the layers $i-1$ and $i$ are not directly connected.
The insight of \citet{malach2020proving} is to use two intermediate (fully connected ReLU) neurons $z^+$ and $z^-$ of the large network $G$ to mimic one weight $\wF$ of the target network
(see\cref{fig:weight_decomp}):
Calling $\zpin, \zpout,\zmin, \zmout$ the input and output weights of $z^+$ and $z^-$
that match the input and output of the connection $\wF$,
then in the pruned network $\hat{G}$ all connections apart from these 4 are masked out by a binary mask $\vmask$ set to 0.
These two neurons together implement a `virtual' weight $\hat{w}$ and calculate the function $x\mapsto \hat{w}x$ by taking advantage of the identity $x = \activ(x) - \activ(-x)$: 
\begin{align*}
    \hat{w} = \zpout\activ(\zpin x) + \zmout\activ(\zmin x)
\end{align*}
Hence, if $\zpin \approx \wF \approx - \zmin$ and $\zpout \approx 1 \approx -\zmout$,
the virtual weight $\hat{w}$ made of $z^+$ and $z^-$ is approximately $\wF$.
Then, for each target weight $\wF$, \citet{malach2020proving} sample many such intermediate neurons to ensure that two of them can be pruned so that $|\wF - \hat{w}|\leq \eps_w$ with high probability.
This requires $\Omega(1/\eps_w^2)$ samples and, when combined with \cref{thm:propagation} (see \cref{ex:propag}),
makes the general bound on the whole network grow exponentially in the number of layers,
unless strong constraints are imposed.

\begin{figure}
    \centering
    \includegraphics[width=0.85\textwidth]{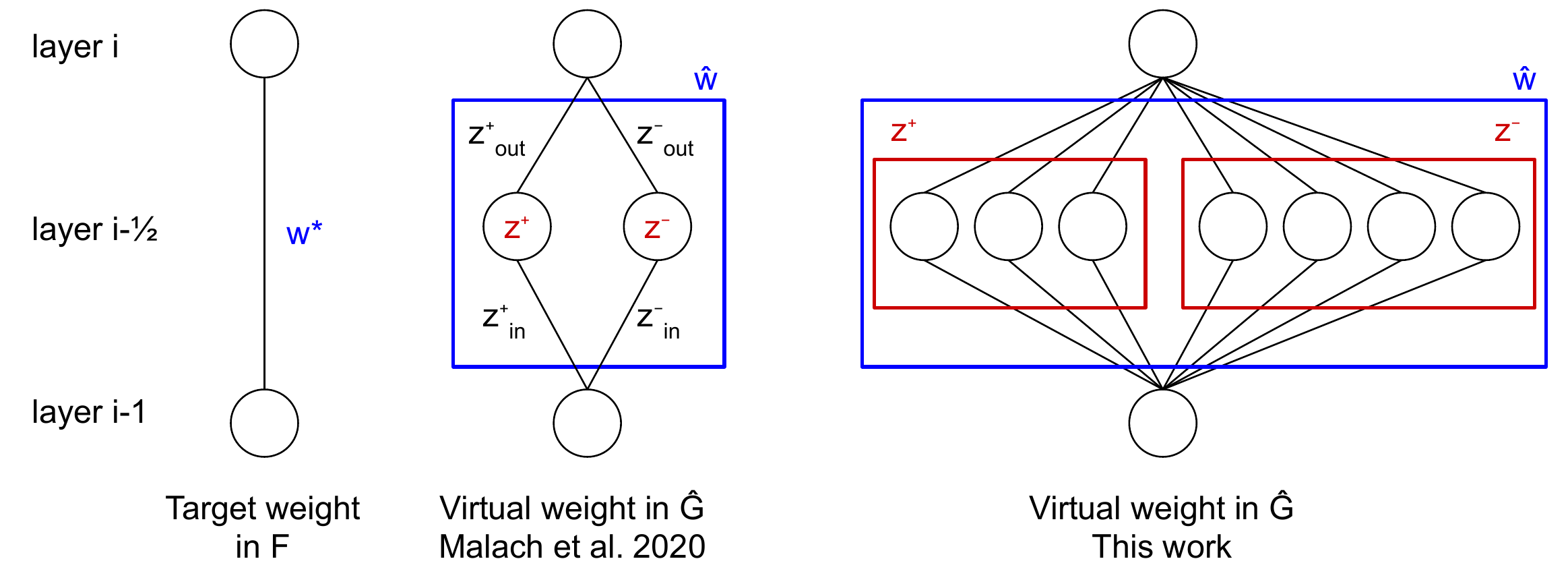}
    \caption{
    The target weight $\wF$ is simulated
    in the \emph{pruned} network $\hat{G}$
    by 2 intermediate neurons, requiring $1/\eps^2$ sampled neurons (previous work)
    or by $2\log 1/\eps$ intermediate neurons due to a `binary' decomposition of $\wF$,
    requiring only $O(\log 1/\eps)$ sampled neurons (this work).}
    \label{fig:weight_decomp}
\end{figure}

To obtain a logarithmic dependency on $\eps_w$, we use three new insights that take advantage of the composability of neural networks: 1) `binary' decomposition of the weights, 2) product weights, and 3) batch sampling. We detail them next.

\paragraph{Weight decomposition.}
Our most important improvement is to build the weight $\hat{w}$ not with just two intermediate neurons, but with $O(\log\nicefrac{1}{\eps})$ of them, so as to decompose the weight into pieces
of different precisions, and recombine them with the sum in the neuron at layer $i+1$ (see \cref{fig:weight_decomp}), using a suitable binary mask vector $\vmask$ in the pruned network $\hat{G}$.
Intuitively, the weight $\hat{w}$ is decomposed into its binary representation up to a precision of $k\approx \lceil\log_2 1/\eps\rceil$ bits: $\sum_{s=1}^k \mask_s 2^{-s}$. 
Using a uniform distribution to obtain these weights $2^{-s}$ would not help, however.
But, because the high precision bits are now all centered around 0,
we can use a hyperbolic sampling distribution $\pw(|w|) \propto 1/w$ which has high density near 0.
More precisely, but still a little simplified,
for a weight $\wF\in[-1, 1]$ we approximate $\wF$ within $\approx 2^{-k}$ accuracy
with the virtual weight $\hat{w}$ such that:
\begin{align}\label{eq:bindecomp}
    \hat{w}x =
    \sum_{s=1}^k \mask_s \left[\wa_{, s} \activ(\wb_{, s} x) + \wc_{, s}\activ(\wD_{, s} x) \right]
    \approx \sum_{s=1}^k \mask_s \sgn(\wF)2^{-s}x \approx \wF x
\end{align}
where $\mask_s\in\{0, 1\}$ is factored out since all connections have the same mask,
and where $\wa_{, s}\wb_{, s}\approx \sgn(\wF)2^{-s}\approx \wc_{, s}\wD_{, s}$
and $\wa_{, s}>0$, $\sgn(\wb_{, s})=\sgn(\wF)$, $\wc_{, s}<0$ and $\wD_{, s} = -\sgn(\wF)$.
Note however that, because of the inexactness of the sampling process,
we use a decomposition in base $\nicefrac32$ instead of base 2 
(\cref{lem:grd} in \cref{sec:random_continuous}).

\paragraph{Product weights.}
Recall that $\wa_{, s} \activ(\wb_{, s} x)=\wa_{, s} \max\{0, \wb_{, s} x\}$.
For fixed signs of $\wa_{, s}$ and $\wb_{, s}$, this function can be equivalently calculated
for all possible values of these two weights such that the product $\wa_{, s}\wb_{, s}$ remains unchanged.
Hence, forcing $\wa_{, s}$ and $\wb_{, s}$ to take 2 specific values is wasteful
as one can take advantage of the cumulative probability mass of all their combinations.
We thus make use of the induced product distribution,
which avoids squaring the number of required samples.
Define the distribution $\pwplus$ for positive weights $w\in[α, β]$ with $0< α <β$ and
$\pw$, symmetric around 0, for $w\in[-β, -α]\cup[α, β]$:
\begin{align*}
    \pwplus(w) = \frac{1}{w\ln(\beta/\alpha)}\propto\frac{1}{w}\,, \quad\text{and}\quad
     \pw(w) = \pw(-w) = \tfrac12\pwplus(|w|) = \frac{1}{2|w|\ln(β/α)}\,.
\end{align*}
Then, instead of sampling uniformly until both $\wa_{,s}\!\approx\!1$ and 
$\wb_{,s}\!\approx\!\wF$, we sample both from $\pw$ so that $\wa_{, s}\wb_{, s} \approx \wF$,
taking advantage of the induced product distribution $\pwprod\!\approx\!\tfrac12 \pwplus$
(\cref{lem:prodw}).

\paragraph{Batch sampling.} 
Sampling sufficiently many intermediate neurons so that a subset of them are employed in approximating one target weight $\wF$ with high probably and then discarding (pruning) all other intermediate neurons is wasteful.
Instead, we allow these samples to be `recycled' to be used for other neurons in the same layer.
This is done by partitioning the neurons in different buckets (categories) and ensuring
that each bucket has enough neurons (\cref{lem:fillcat}).

\section{Theoretical Results}\label{sec:relu}

We now have all the elements to present our central theorem,
which tells us how many intermediate neurons to sample to 
approximate all weights at a layer of the target network with high probability.
\cref{rem:nperw} below will then describe the result in terms of number of neurons
per target weight.

\begin{theorem}[ReLU sampling bound]\label{thm:relu}
For a given architecture $\archi$ where $\activ$ is the ReLU function,
with weights in $[-\wmax, \wmax]$
and a given accuracy $\eps$,
the network $G$ constructed as above 
with weights sampled from $\pw$ with $[α, β] = [α'/q, β'/q], α'=2\eps_w/9, β'=2\wmax/3$, and $q=(α'β')^{\nicefrac14}$,
requires only 
to sample $\Msim_i$ intermediate neurons for each layer $i$, where
\begin{align*}
    \Msim_i &= \ceiling{16k'\left(n_in_{i-1} + \ln \frac{2\nlayers k'}{\delta}\right)}
    \quad\text{ with }\quad k'=\log_{\nicefrac{3}{2}}\frac{3\wmax}{ε_w}
    \quad\text{ and}\\
    \eps_w &= \eps\left/\left(e~\nlayers~\nmax^{\nicefrac32}~\Fmax(\inputset)~
    \prod_{i=1}^\nlayers \max\{1,\|\hat{W}_{i}\|_2\}\right)\right.
\end{align*}
($\eps_w$ is in \cref{thm:propagation} with $\lambda=1$ for ReLU),
in order to ensure that for any target network $\Fnn$ with the given architecture $\archi$,
there exist binary masks $\vmask_{i, j} = (b_{i, j, 1},\ldots b_{i, j, n_{i-1}})$ of $G$ such that for the resulting subnetwork $\hat{G}$,
\begin{align*}
    \sup_{x\in\inputset} \|\Fnn(x)-\hat{G}(x)\|_2 \leq \eps\,.
\end{align*}
\end{theorem}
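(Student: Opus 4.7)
The plan is to combine three ingredients: (a) the deterministic per-weight tolerance given by Lemma \ref{thm:propagation}, (b) a lower bound on the probability that a single sampled intermediate neuron can be used to realise one ``piece'' of a base-$3/2$ decomposition of a target weight (the ``grd''/``prodw'' lemmas promised for Section \ref{sec:random_continuous}), and (c) a batch-sampling / balls-into-bins estimate (the ``fillcat'' lemma). The structure of the proof is then: reduce to an entry-wise weight approximation, count buckets, bound per-bucket mass, apply balls-into-bins, union-bound over layers.

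\textbf{Reduction to entry-wise approximation.} First apply Lemma \ref{thm:propagation} with $\lambda = 1$ for ReLU: it suffices to exhibit masks $\boldsymbol{b}$ such that every virtual weight $\hat{w}$ of $\hat{G}$ satisfies $|w^* - \hat w| \le \eps_w$ for the $\eps_w$ stated. The rest of the argument is purely about constructing such weight-wise approximations.

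\textbf{Bucket coverage for a single target weight.} Fix a target weight $w^* \in [-\wmax,\wmax]$ and write $\hat w$ in the form of Equation \eqref{eq:bindecomp}, where for each scale $s \in \{1,\dots,k'\}$ one wants a pair of sampled weights $(\zpout_{,s}, \zpin_{,s})$, drawn independently from $\pw$ (symmetric hyperbolic on $[-\beta,-\alpha]\cup[\alpha,\beta]$), whose product lands within an appropriate relative window around $\sgn(w^*)(3/2)^{-s}$, and analogously for $(\zmout_{,s}, \zmin_{,s})$. The choice $\alpha' = 2\eps_w/9$, $\beta' = 2\wmax/3$ and the rescaling by $q = (\alpha'\beta')^{1/4}$ is engineered so that these windows lie strictly inside $[\alpha,\beta]$ and that their preimage under the product map has normalised hyperbolic mass at least $1/(8k')$; this is the content of the product-weight lemma (prodw) combined with the base-$3/2$ geometric expansion (grd) cited in Section \ref{sec:constructG}. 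In words: one sampled intermediate neuron hits a specified (target weight, scale, sign) bucket with probability at least $\tfrac{1}{8k'}$.

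\textbf{Balls-into-bins within one layer.} At layer $i$ there are $n_i n_{i-1}$ target weights, each of which defines $2k'$ buckets (two signs $\pm$, $k'$ scales), so the total number of buckets to fill is at most $2k' n_i n_{i-1}$. Treating each of the $\Msim_i$ intermediate neurons as a Bernoulli$(\ge 1/(8k'))$ trial for each fixed bucket, the ``fillcat'' batch-sampling lemma yields that every bucket is non-empty with probability at least $1 - \delta/\ell$ provided
\[
\Msim_i \;\ge\; 16 k' \Bigl(n_i n_{i-1} + \ln\tfrac{2\nlayers k'}{\delta}\Bigr),
\]
which is exactly the ceiling in the statement. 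The factor $16k' = 2\cdot 8k'$ is the inverse of the per-sample bucket probability with a factor of $2$ coming from the standard Chernoff slack, the $n_i n_{i-1}$ summand reflects the number of target weights, and the $\ln(2\nlayers k'/\delta)$ absorbs the probability slack that will be paid in the final union bound.

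\textbf{Union bound and conclusion.} A union bound over the $\nlayers$ layers gives overall success probability at least $1-\delta$. On this event, for every target weight $w^*$ and every scale $s$ we pick one pair of successful intermediate neurons, set the corresponding entries of the binary mask $\vmask$ to $1$, and set all other entries to $0$; the resulting $\hat G$ has virtual weights $\hat W$ with $\|\hat W - \WF\|_{\max} \le \eps_w$ by construction of Step 2, so Step 1 (Lemma \ref{thm:propagation}) delivers $\sup_{x\in\inputset}\|\Fnn(x)-\hat G(x)\|_2 \le \eps$.

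The main obstacle is Step 2: verifying that the pushforward of $\pw\otimes\pw$ under the product map indeed puts mass at least $1/(8k')$ on every one of the $k'$ geometrically shrinking windows, uniformly in $w^*$. Everything else is bookkeeping (choice of $\alpha',\beta',q$ so the ladder of windows is contained in $[\alpha,\beta]$), concentration (Chernoff applied inside ``fillcat''), and union bounds; the only genuinely non-routine input is this per-bucket mass estimate, which is precisely why the technical core of the paper lives in Section \ref{sec:random_continuous}.
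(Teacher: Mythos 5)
Your overall route (reduce to a per-weight tolerance via \cref{thm:propagation}, hyperbolic product weights with the correct signs, a base-$3/2$ decomposition, batch sampling within a layer, union bound over layers) is the same as the paper's, but the batch-sampling step as you state it has a genuine gap. You only require that each of the $2k'n_in_{i-1}$ per-target-weight buckets (one per target weight, scale and sign) be \emph{non-empty}. That requirement is too weak: after pruning, an intermediate neuron keeps a single input and a single output connection, so it can serve at most one target weight; if the same sampled neuron happens to be the only witness for the scale-$s$ bucket of two different target weights, only one of them can actually use it, and the simultaneous decomposition of all weights in the layer fails. The requirement is also inconsistent with the bound you then write down: ``every one of $2k'n_in_{i-1}$ buckets is non-empty'' would be handled by a union bound over buckets and would cost on the order of $k'\ln\bigl(k'n_in_{i-1}\nlayers/\delta\bigr)$ samples, i.e.\ a $\ln(n_in_{i-1})$ term, not the additive $n_in_{i-1}$ term appearing in $\Msim_i$; so your accounting does not actually yield the stated formula.

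The paper's argument differs at exactly this point. The categories are the $2k$ \emph{target-independent} sign/scale classes coming from the greedy golden-ratio decomposition of \cref{lem:grd}: any sampled product falling in the fixed interval $I_s$ can serve as the scale-$s$ ``digit'' for any target weight (the windows are not centred on a digit expansion of the particular $\wF$, which is also why the per-category mass is exactly the $1/(8k')$ you quote). Then \cref{lem:fillcat} is invoked with $n\leadsto n_in_{i-1}$ to guarantee that \emph{each} of these $2k$ categories contains at least $n_in_{i-1}$ sampled neurons, so that every one of the $n_in_{i-1}$ target weights of the layer can be assigned its own disjoint set of at most $2k$ intermediate neurons. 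This is precisely where the $n_in_{i-1}$ inside the parentheses and the factor $16k'=2\cdot 8k'$ come from (the mutual exclusivity of the $+$ and $-$ sign patterns is what lets the $2k$ classes be treated as one categorical distribution). To repair your proof you would need to replace ``each bucket non-empty'' by this stronger per-category counting requirement, or otherwise resolve conflicting assignments explicitly, which is what the sample-recycling variant \cref{thm:relu2} does with additional care about which weight entries have already been queried.
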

\begin{proof}

\textbf{Step 1. Sampling intermediate neurons to obtain product weights.}
Consider a single target weight $\wF$.
Recalling that $\wa_{, s}>0$ and $\wc_{, s} <0$,
we rewrite \cref{eq:bindecomp} as 
\begin{align*}
    \hat{w}x &= \sum_{s=1}^k \mask_s \wa_{, s}\activ(\wb_{, s}x) + \sum_{s=1}^k \mask_s \wc_{, s}\activ(\wD_{, s}x) \\
    &= \sum_{s=1}^k \mask_s \activ(\underbrace{\wa_{, s}\wb_{, s}}_{\hat{w}^+}x) + \sum_{s=1}^k -\mask_s \activ(-\underbrace{\wc_{, s}\wD_{, s}}_{\hat{w}^-}x)
\end{align*}
The two virtual weight $\hat{w}^+$ and $\hat{w}^-$ are obtained separately.
We need both $|\wF- \hat{w}^+| \leq \eps_w/2$ and $|\wF- \hat{w}^-| \leq \eps_w/2$
so that $|\wF - \hat{w}|\leq \eps_w$.

Consider $\hat{w}^+$ (the case $\hat{w}^-$ is similar).
We  now sample $\msim$ intermediate neurons,
fully connected to the previous and next layers,
but only keeping the connection between the same input and output neurons as $\wF$
(the other weights are zeroed out by the mask $\vmask$).
For a single sampled intermediate neuron $z$, 
all its weights, including $\wb$ and $\wa$, are sampled from $\pw$,
thus the product $|\wa\wb|$ is sampled from the induced product distribution $\pwprod$
and, a quarter of the time, $\wa$ and $\wb$ have the correct signs (recall we need $\wa>0$ and $\sgn(\wb)=\sgn(\wF)$).
Define
\begin{align*}
    \pab(\wa\wb) = P(w=\wa\wb &~\land~ \wa\sim\pw ~\land~ \wa>0 \\
    &~\land~ \wb\sim\pw ~\land~ \sgn(\wb)=\sgn(\wF))
\end{align*}
then with
$
    \pab(\wa\wb) ~\geq~ \pwprod(|\wa\wb|)/4 ~\geq~ \pwplus(|\wa\wb|)/8
$ 
where the last inequality follows from \cref{lem:prodw} for $|\wa\wb|\in[α', β']$,
$\wa\in[α, β]$ and $\wb\in[α, β]$,
and similarly for $\wc\wD$ with $\pcd(\wc\wD)\geq \pwplus(|\wc\wD|)/8$.

Note that because $\sgn(\wa) = - \sgn(\wc)$ and $\sgn(\wb) = -\sgn(\wD)$,
the samples for $\hat{w}^+$ and $\hat{w}^-$ are mutually exclusive 
which will save us a factor 2 later.


\textbf{Step 2. `Binary' decomposition/recomposition.}
Consider a target weight $\wF \in [-\wmax, \wmax]$.
Noting that \cref{cor:samplemax} equally applies for negative weights by first negating them,
we obtain $\hat{w}^+$ and $\hat{w}^-$
by two separate applications of \cref{cor:samplemax}
where we substitute
$P_\eps\leadsto P_\eps/8=\pwplus/8$,
$\eps\leadsto\eps_w/2$, $\delta\leadsto\delta_w$.
%
Substituting $P_\eps$ with $\pwplus/8$ in \cref{eq:cnorm} shows that this leads to a factor 8 on $\msim$.
Therefore,
by sampling $\msim=8\lceil k'\ln\frac{k'}{δ_w}\rceil$ weights
  from $\pwprod$
  in $[α', β'] = [2ε_w/9, 2\wmax/3]$
with $k'=\log_{\nicefrac{3}{2}}\frac{3\wmax}{ε_w}$
ensures that there exists a binary mask $\vmask$ of size at most $k'$ such that
$|\wF - \hat{w}^+| \leq \eps_w/2$ with probability at least $1-\delta_w$.
We proceed similarly for $w^-$.
Note that \cref{cor:samplemax} guarantees $|\hat{w}|\leq |\wF| \leq \wmax$,
even though the large network $G$ may have individual weights larger than $\wmax$.

\textbf{Step 2'. Batch sampling.}
Take $k := \lceil\log_{\nicefrac32}\frac{\wmax}{2\eps_w}\rceil \leq k'$
to be the number of `bits' required to decompose a weight with \cref{cor:samplemax} (via \cref{lem:grd}).
Sampling $\msim$ different intermediate neurons for each target weight and discarding $\msim-k$ samples is wasteful: Since there are $n_i n_{i-1}$ target weights at layer $i$, we would need $n_in_{i-1}\msim$ intermediate neurons, when in fact most of the discarded neurons could be recycled for other target weights.

Instead, we sample all the weights of layer $i$ at the same time,
requiring 
that we have at least $n_i n_{i-1}$ samples for each of the $k$ intervals
of the `binary' decompositions of $\hat{w}^+$ and $\hat{w}^-$.
Then we use \cref{lem:fillcat} with $2k$ categories:
The first $k$ categories are for the decomposition of $\hat{w}^+$ and the next $k$ ones
are for $\hat{w}^-$.
Note that these categories are indeed mutually exclusive as explained in Step 1. and,
adapting \cref{eq:cnorm},
each has probability at least $\frac18\int_{w=\gamma^{u+1}}^{\gamma^u}\pwplus(w)dw\geq 1/(8\log_{\nicefrac32} (3\wmax/\eps_w)) = 1/(8k')$ (for any $u$).
Hence, using \cref{lem:fillcat} where we take $n\leadsto n_in_{i-1}$
and $\delta\leadsto\delta_i$,
we only need to sample
$\lceil 16k'(n_in_{i-1} + \ln\frac{2k}{\delta_i})\rceil \leq \lceil 16k'(n_in_{i-1} + \ln\frac{2k'}{\delta_i})\rceil = M_i$
intermediate neurons to ensure that with probability at least $1-\delta_i$
each $\hat{w}^+$ and $\hat{w}^-$ can be decomposed into $k$ product weights in each 
of the intervals of \cref{lem:grd}.


\textbf{Step 3. Network approximation.}
Using a union bound, we need $\delta_i = \delta/\nlayers$
for the claim to hold simultaneously for all $\nlayers$ layers.
Finally, when considering only the virtual weights $\hat{w}$ (constructed from $\hat{w}^+$ and $\hat{w}^-$),
$\hat{G}$ and $\Fnn$ now have the same architecture, hence choosing $\eps_w$ as in \cref{thm:propagation} 
ensures that with probability at least $1-\delta$, $\sup_{x\in\inputset}\|\Fnn(x)-\hat{G}(x)\| \leq \eps$.
\end{proof}

\begin{remark}\label{rem:nperw}
Consider $n_i=\nmax$ for all $i$ and assume $\|W_i\|_2 \leq 1$, $\wmax = 1$
and $\Fmax(\inputset) \leq 1$.
Then $\eps_w \geq \eps/(e\nlayers \nmax^{\nicefrac32})$
and $k'\leq\log_{\nicefrac32} (3e\nlayers\nmax^{\nicefrac32}/\eps)$.
Then we can interpret \cref{thm:relu} as follows:
When sampling the weights of a ReLU architecture from the hyperbolic distribution,
we only need to sample $\Msim_i/\nmax^2 \leq 16k' + \ln(2\nlayers k'/\delta)/\nmax^2 =$ {\boldmath $\tilde{O}(\log(\nlayers\nmax/\eps))$} neurons per target weight (assuming $\nmax^2 > \log (\nlayers k'/\delta)$).
Compare with the bound of \citet[Theorem A.6]{malach2020proving} which,
under the further constraints that $\wmax \leq 1/\sqrt{\nmax}$
and $\max_{i\in[\nlayers]}\|\WF_i\|_0 \leq \nmax$
and with uniform sampling in $[-1, 1]$,
needed to sample $\Msim_i/\nmax^2 = \lceil64\nlayers^2\nmax^3\log(2N/\delta)/\eps^2\rceil$
neurons per target weight.
\end{remark}

\begin{example}\label{ex:numeric}
Under the same assumptions as \cref{rem:nperw},
for $\nmax=100, \nlayers=10, \eps=0.01, \delta=0.01$,
the bound above for \citet{malach2020proving} gives $\Msim_i/\nmax^2\leq 2\cdot 10^{15}$,
while our bound in \cref{thm:relu} gives $\Msim_i/\nmax^2 \leq 630$.
\end{example}
\begin{example}\label{ex:numeric_free}
Under the same conditions as \cref{ex:numeric},
if we remove the assumption that $\|W_i\|_2 \leq 1$, then \cref{thm:relu} gives
$M_i/\nmax^2 =$ {\boldmath $\tilde{O}(\nlayers\log(\nmax/\eps))$}
and numerically 
$M_i/\nmax^2 \leq 2\,450$.
\end{example}

We can now state our final result.
\begin{corollary}[Weight count ratio]\label{cor:mainresult}
Under the same conditions as \cref{thm:relu},
Let $\NF$ be the number of weights in the fully connected architecture $\archi$
and $N_{G}$ the number of weights of the large network $G$,
then the weight count ratio is $N_{G}/\NF \leq 32\nmax k' + \tilde{O}(\log(k'/\delta))$.
\end{corollary}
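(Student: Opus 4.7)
The plan is to count the weights of $G$ directly and divide by $\NF$. Because $G$ inserts, between each pair of consecutive target layers $i-1$ and $i$, an intermediate ReLU layer containing $\Msim_i$ sampled neurons which is fully connected on both sides, layer $i$ contributes $\Msim_i(n_{i-1}+n_i)$ weights to $G$. Hence $N_G=\sum_{i=1}^{\nlayers}\Msim_i(n_{i-1}+n_i)$, while $\NF=\sum_{i=1}^{\nlayers}n_{i-1}n_i$ for the fully connected target architecture.

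Next, I would substitute the bound $\Msim_i \leq 16 k'(n_i n_{i-1}+\ln(2\nlayers k'/\delta))+1$ from \cref{thm:relu}. Using $n_{i-1}+n_i \leq 2\nmax$ on the product term splits $N_G$ into a dominant piece and a remainder:
$$N_G \leq 32\nmax k' \sum_i n_i n_{i-1} \;+\; \bigl(16 k'\ln(2\nlayers k'/\delta)+1\bigr)\sum_i(n_{i-1}+n_i) \;=\; 32\nmax k' \NF + R ,$$
so that $N_G/\NF \leq 32\nmax k' + R/\NF$, which immediately recovers the leading $32\nmax k'$ factor.

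To control $R/\NF$, I would observe that in any target architecture with widths at least $2$ one has $n_{i-1}n_i \geq (n_{i-1}+n_i)/2$, so $\sum_i(n_{i-1}+n_i)\leq 2\NF$. Therefore $R/\NF \leq 32 k'\ln(2\nlayers k'/\delta)+2$. Since $k'$ is itself logarithmic in the problem parameters and the extra $\log \nlayers$ factor is polylog in the same quantities, the remainder fits into $\tilde{O}(\log(k'/\delta))$, yielding the claimed bound.

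The main (minor) obstacle is essentially bookkeeping: absorbing the ceiling in the definition of $\Msim_i$ cleanly, verifying that the ratio $\sum_i(n_{i-1}+n_i)/\NF$ is bounded by a small constant under mild width assumptions (so that $R$ really is lower-order), and matching the exact polylog factors hidden in the $\tilde{O}$ notation. No further probabilistic or analytic ingredients beyond \cref{thm:relu} are needed.
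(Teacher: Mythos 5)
Your proposal is correct and follows essentially the same route as the paper: count $N_G=\sum_i (n_{i-1}+n_i)M_i$ and $\NF=\sum_i n_{i-1}n_i$, pull out the leading $16k'n_{i-1}n_i$ term with $n_{i-1}+n_i\leq 2\nmax$, and absorb the remainder into $\tilde{O}(\log(k'/\delta))$. Your only extra caveat (widths at least $2$) is unnecessary, since $n_{i-1}+n_i\leq 2n_{i-1}n_i$ already holds for all positive integer widths.
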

\begin{proof}
We have $\NF=\sum_{i=1}^\ell n_{i-1}n_i$,
and the total number of weights in the network $G$ if layers are fully connected is at most $N_{G}=\sum_{i=1}^\ell(n_{i-1}+n_i)M_i$,
where $M_i=16k'n_{i-1}n_i+O(\log(k'/\delta))$.
Hence the weight count ratio is $N_{G}/\NF \leq 32\nmax k' + \tilde{O}(\log(k'/\delta))$.
\end{proof}

\begin{remark}
Since in the pruned network $\hat{G}$ each target weight requires $k'$ neurons,
the large network has at most a constant factor more neurons than the pruned network.
\end{remark}



\section{Technical lemma: Random weights}
\label{sec:random_continuous}

The following lemma shows that if $\msim$ weights are sampled from a hyperbolic distribution,
we can construct a `goldary' (as opposed to `binary') representation of the weight
with only $\tilde O(\ln\frac1{ε}\ln\frac1{δ})$ samples.
Because of the randomness of the process, we use a ``base'' $3/2$ instead of a base 2 for logarithms, so 
that the different `bits' have overlapping intervals.
As the proof clarifies, the optimal base is $1/γ=\frac12(\sqrt{5}+1)\dot=1.62$. The base $1/γ=\nicefrac32$ is convenient.
The number $\frac12(\sqrt{5}+1)$ is known as the `golden ratio' in the mathematical literature, which explains the name we use.

\begin{lemma}[\boldmath Golden-ratio decomposition] 
\label{lem:grd} 
  For any given $\eps>0$ and $1/φ≤γ<1$, where $φ:=\frac12(\sqrt{5}+1)$ is the golden ratio, define the probability density
  $P_{\eps}(v):=\frac{c'}{v}$ for $v \in [εγ^2,γ]$ 
  with normalization $c':=[\ln\frac1{γε}]^{-1}$.
  For any $\delta\in(0,1)$,
  if $\msim=\lceil k'\ln\nicefrac{k'}{δ}\rceil=\tilde{Ω}(\lnε⋅\lnδ)$
  with $k':=\log_γ(γε)$,
  then with probability at least $1-δ$ over the random sampling of $\msim$ `weights' $v_s \sim P_{\eps}$ for $s=1,...,\msim$, the following holds:
  For every `target weight' $w∈[0,1]$,
  there exists a mask $\vmask∈\{0,1\}^\msim$ with $|\vmask|≤ k'$
  such that $\hat w:=\mask_1 v_1+...+\mask_\msim v_\msim$ is $ε$-close to $w$,
  indeed $w-ε≤\hat w≤w$.
\end{lemma}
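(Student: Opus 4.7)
The plan is to reduce the lemma to a discrete covering problem. I would partition the support of $P_\eps$ into $k'$ geometric buckets, verify each bucket has equal probability $1/k'$, invoke a coupon-collector bound to ensure every bucket contains at least one sample with probability $\geq 1-\delta$, and then construct $\hat w$ greedily by taking at most one sample from each bucket.

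Concretely, partition $[\eps\gamma^2,\gamma]$ into the intervals $I_s=[\gamma^{s+1},\gamma^s]$ for $s=1,\ldots,k'$. A short computation with the hyperbolic density $P_\eps(v)=c'/v$ and the normalization $c'=1/|\ln(\gamma\eps)|$ gives $\Pr_{v\sim P_\eps}[v\in I_s]=|\ln\gamma|\cdot c' = 1/k'$ for every $s$, using $k'|\ln\gamma|=|\ln(\gamma\eps)|$. Then a union bound over the $k'$ buckets yields $\Pr[\text{some }I_s\text{ empty}]\leq k'(1-1/k')^\msim \leq k' e^{-\msim/k'}$, which is at most $\delta$ as soon as $\msim\geq k'\ln(k'/\delta)$, matching the hypothesis. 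Condition on the good event that every $I_s$ contains at least one sample, and fix one representative $\tilde v_s\in I_s$ per bucket.

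On this event, construct $\hat w$ greedily from the coarsest bucket down to the finest. Set $r_0=w$, and for $s=1,\ldots,k'$, include $\tilde v_s$ (setting $\mask_s=1$) iff $\tilde v_s\leq r_{s-1}$, updating $r_s=r_{s-1}-\tilde v_s$ in that case and $r_s=r_{s-1}$ otherwise. The crux is the invariant $0\leq r_s\leq \gamma^s$, proven by induction on $s$. The base case $r_0=w\leq 1=\gamma^0$ holds because $w\in[0,1]$. If the step skips $\tilde v_s$, then $r_s=r_{s-1}<\tilde v_s\leq\gamma^s$, so the invariant is preserved. If it includes $\tilde v_s$, then $r_s\leq \gamma^{s-1}-\gamma^{s+1}=\gamma^{s-1}(1-\gamma^2)$, and the assumption $\gamma\geq 1/\phi$ (equivalent to $\gamma^2+\gamma\geq 1$, i.e.\ $1-\gamma^2\leq\gamma$) delivers $r_s\leq\gamma^s$ and $r_s\geq 0$ by construction. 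Taking $s=k'$ gives $r_{k'}\leq\gamma^{k'}=\gamma\eps\leq\eps$, so $w-\eps\leq\hat w\leq w$, and $|\vmask|\leq k'$ since at most one sample per bucket is used.

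The main obstacle — and the reason for the precise constants in the statement — is aligning these two steps: the bucket count $k'$ must simultaneously be small enough that coupon collector only costs $\tilde O(k'\ln(k'/\delta))$ samples, and large enough that each greedy subtraction cannot overshoot the next scale. The golden-ratio threshold $\gamma\geq 1/\phi$ is precisely the borderline at which a single inclusion reduces the residual from $\gamma^{s-1}$ to at most $\gamma^s$, closing the induction; any larger base $1/\gamma$ would break the invariant after one step. Verifying $P_\eps(I_s)=1/k'$ and closing this induction are therefore the two places where any looseness would directly inflate the final sample complexity.
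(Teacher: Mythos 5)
Your proposal is correct and takes essentially the same route as the paper: equal-probability geometric buckets $(\gamma^{s+1},\gamma^s]$ under the $1/v$ density, a union/coupon-collector bound giving $\msim=\lceil k'\ln(k'/\delta)\rceil$, and a greedy subtraction with the invariant $0\leq r_s\leq\gamma^s$ closed by $1-\gamma^2\leq\gamma$, i.e.\ $\gamma\geq 1/\varphi$. The only cosmetic differences are that the paper works with $k=\lceil\log_\gamma\eps\rceil\leq k'$ buckets (avoiding the non-integrality of $k'$) and triggers the subtraction when $r_{s-1}\geq\gamma^s$ rather than when $\tilde v_s\leq r_{s-1}$, but both rules close the same induction.
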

\vspace{1mm}

\begin{proof}
Let $k = \lceil\log_γε\rceil \leq 1+\log_γε=k'$.
First, consider a sequence $(v_i)_{i\in[k]}$ such each $v_i$ is in the interval 
$I_i:=(γ^{i+1},γ^i]$ for $i=1,...,k$.
We construct an approximating $\hat w$ for any weight $w_0:=w∈[0,1]$ by successively subtracting
$v_i$ when possible. Formally 
\begin{align*}
  \text{for}(i=1,...,k) 
  ~\{\text{if}~ w_{i-1}≥γ^i ~\text{then}~ \{ w_i:=w_{i-1}-v_i;~\mask_i=1 \} 
                         ~\text{else}~ \{ w_i:=w_{i-1};~\mask_i=0 \}\} 
\end{align*}
By induction we can show that $0≤w_i≤γ^i$. This holds for $w_0$. 
Assume $0≤w_{i-1}≤γ^{i-1}$: If $w_{i-1}<γ^i$ then $w_i=w_{i-1}<γ^i$.
\begin{align*}
  \text{If $w_{i-1}≥γ^i$ then $w_i=w_{i-1}-v_i≤γ^{i-1}-γ^{i+1}=(γ^{-1}-γ)γ^i≤γ^i$.}
\end{align*}
The last inequality is true for $γ≥\frac12(\sqrt{5}-1)$, which is satisfied due to the restriction $1/φ≤γ<1$.
Hence the error $0≤w-\hat w=w_k≤γ^k≤ε≤γ^{k-1}$ for $k:=\lceil\log_γ ε\rceil\geq0$.

Now consider a random sequence $(v_{i})_{i\in[\msim]}$ where
we sample $v_s \overset{iid}{\sim} P$ over the interval $[γ^2 ε,γ]$ for $s=1,...,\msim>k$.
In the event that there is at least one sample in each interval $I_i$, we can use the construction above with a subsequence $\tilde{v}$ of $v$
such that $\tilde{v}_i\in I_i$ 
and $\sum_{i\in[k]} \mask_i \tilde{v}_i =w_k$ as in the construction above.
%
Next we lower bound the probability $p$ that each interval $I_i$ contains at least one sample.
Let $E_i$ be the event ``no sample is in $I_i$''
and let $c=\min_{i\in[k]} P[v∈I_i]$. 
Then 
$P[E_i]=(1-P[v∈I_i])^\msim≤(1-c)^\msim$, hence
\begin{align*}
  p ~=~ 1-P[E_1∨...∨E_k] ~≥~ 1-\sum_{i=1}^kP[E_i] ~≥~ 1-k(1-c)^\msim ~≥~ 1-k\exp(-c\msim)
\end{align*}
and thus choosing $\msim \geq \ceiling{\frac1c\ln(k/δ)}$
ensures that $p \geq 1-δ$.
Finally,
\begin{align}\label{eq:cnorm}
    \! c = \min_{i\in[k]} P[v∈I_i] = \min_i P[γ^{i+1}\!<v\!≤\!γ^i] = \min_i \int_{γ^{i+1}}^{γ^i}\frac{c'}v dv = c'\ln\frac{1}{γ} = 1/\log_γ(γε) = \frac{1}{k'}
\end{align}
and so we can take $\msim = \ceiling{k'\ln\frac{k'}{δ}}$.
\end{proof}


\begin{corollary}[Golden-ratio decomposition for weights in $[0, \wmax{]}$]
\label{cor:samplemax} 
  For any given $\eps>0$, define the probability density
  $P_{\eps}(v):=\frac{c'}{v}$ for $v \in [\frac49ε,\frac23\wmax]$ 
  with normalization $c':=1/\ln\frac{3\wmax}{2ε}$.
  Let $k':=\log_{\nicefrac{3}{2}}\frac{3\wmax}{2ε}$,
  For any $\delta\in(0,1)$,
  if $\msim=\lceil k'\ln\frac{k'}{δ}\rceil=\tilde{Ω}(\ln\frac1{ε}⋅\ln\frac1{δ})$,
  then with probability at least $1-δ$ over the random sampling of $\msim$ `weights' $v_s \sim P_{\eps}$ ($s=1,...,\msim$) the following holds:
  For every target `weight' $w∈[0,\wmax]$,
  there exists a mask $\vmask ∈\{0,1\}^\msim$ with $|\vmask|≤ k'$
  such that $\hat w:=\mask_1 v_1+...+\mask_\msim v_\msim$ is $ε$-close to $w$,
  indeed $w-ε≤\hat w≤w$.
\end{corollary}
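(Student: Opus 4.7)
The plan is to derive \cref{cor:samplemax} as a direct rescaling of \cref{lem:grd}. The key observation is that both the density $P_\eps(v)=c'/v$ and the relative error $|w-\hat w|/\wmax$ are scale-covariant in the correct way, so we can lift the unit-interval result to $[0,\wmax]$ by a change of variables $v=\wmax\tilde v$.

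\textbf{Step 1: rescale the target and the accuracy.} Given a target $w\in[0,\wmax]$, set $\tilde w := w/\wmax\in[0,1]$ and $\tilde\eps := \eps/\wmax$. Apply \cref{lem:grd} with $\gamma=2/3$ and the accuracy parameter $\tilde\eps$: the relevant sampling interval becomes $[\tilde\eps\gamma^2,\gamma]=[\tfrac{4\eps}{9\wmax},\tfrac{2}{3}]$, the `number of bits' is $k'=\log_\gamma(\gamma\tilde\eps)=\log_{3/2}\tfrac{1}{\gamma\tilde\eps}=\log_{3/2}\tfrac{3\wmax}{2\eps}$, matching the statement of the corollary, and the required sample size $\lceil k'\ln(k'/\delta)\rceil$ is unchanged.

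\textbf{Step 2: push the construction through the change of variables.} If $\tilde v\sim\tilde P$ is distributed as $\tilde P(\tilde v)=c'/\tilde v$ on $[\tfrac{4\eps}{9\wmax},\tfrac{2}{3}]$, then $v:=\wmax\tilde v$ has density $P_\eps(v)=c'/v$ on $[\tfrac{4\eps}{9},\tfrac{2\wmax}{3}]$, exactly the density and support in the corollary (the normalization $c'=1/\ln\tfrac{3\wmax}{2\eps}$ is invariant because the ratio of endpoints is preserved by the rescaling). Sampling $v_1,\ldots,v_\msim\sim P_\eps$ is therefore equivalent, up to the deterministic bijection $v\mapsto v/\wmax$, to sampling $\tilde v_1,\ldots,\tilde v_\msim\sim\tilde P$. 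By \cref{lem:grd}, with probability at least $1-\delta$ there exists a mask $\vmask\in\{0,1\}^\msim$ with $|\vmask|\le k'$ such that the approximant $\hat{\tilde w}:=\sum_s \mask_s\tilde v_s$ satisfies $\tilde w-\tilde\eps\le\hat{\tilde w}\le\tilde w$.

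\textbf{Step 3: scale back.} Multiplying the inequality by $\wmax$ and setting $\hat w:=\sum_s\mask_s v_s=\wmax\hat{\tilde w}$ yields $w-\eps\le\hat w\le w$, with the same mask $\vmask$ and the same high-probability event. Since this holds for every target $w\in[0,\wmax]$ simultaneously (the mask depends on $w$ but the sampled $v_s$ do not), we are done.

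There is essentially no obstacle: the entire content of the corollary is captured by the linearity $v=\wmax\tilde v$ and the fact that the hyperbolic density $1/v$ is scale-invariant up to a constant that is absorbed into the normalization. The only minor care is to check that every quantity appearing in \cref{lem:grd} (support, normalization, $k'$, sample size) transforms consistently, which the computation in Step 1 verifies.
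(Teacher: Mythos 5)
Your proposal is correct and matches the paper's own argument: the paper also derives \cref{cor:samplemax} from \cref{lem:grd} with $\gamma=2/3$ by rescaling $w'=w/\wmax$ and accuracy $\eps/\wmax$, invoking the scale-invariance of the $1/v$ density, and multiplying back by $\wmax$. Your version just spells out the change-of-variables bookkeeping (support, normalization, $k'$) a bit more explicitly.
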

\begin{proof}
Follows from \cref{lem:grd} with $γ=2/3$ and a simple rescaling argument:
First rescale $w' = w/\wmax$ and apply \cref{lem:grd} with $w'$ and accuracy $\eps/\wmax$.
Then the constructed $\hat{w}'$ satisfies
$w'-ε/\wmax≤\hat{w}'≤w'$
and multiplying by $\wmax$ gives the required accuracy.
Also note that the density $P_\eps(v)\propto 1/v$ is scale-invariant.
\end{proof}
\vspace{1mm}

\section{Related Work}

In the version of this paper that was submitted for review,
we conjectured with supporting experimental evidence that high precision could be obtained also with uniform sampling
when taking advantage of sub-sums (see \cref{sec:uniform_sums}).
After the submission deadline, we have been made aware that \citet{pensia2020optimal}
concurrently and independently submitted a paper that resolves this conjecture, by using a theorem of \citet{lueker1998partition}.
\citet{pensia2020optimal} 
furthermore use a different grouping of the samples in each layer, 
leading to a refined bound with a logarithmic dependency on the number of \emph{weights} per target weight and provide a matching lower bound (up to constant factors).
Their results are heavily anchored in the assumptions that the max norms of the weights and of the inputs are bounded by 1, leaving open the question of what happens without these constraints---this could be dealt with by combining their results with our \cref{thm:propagation}.

\section{Conclusion}

We have proven that large randomly initialized ReLU networks contain many more subnetworks
than previously shown, which gives further weight to the idea that one important task
of stochastic gradient descent (and learning in general) may be to effectively prune connections
by driving their weights to 0, revealing the so-called winning tickets.
One could even conjecture that the effect of pruning is to reach a vicinity of the global 
optimum, after which gradient descent can perform local quasi-convex optimization.
Then the required precision $\eps$ may not need to be very high.

Further questions include the impact of convolutional and batch norm layers, 
skip-connections and LSTMs on the number of required sampled neurons to maintain a good accuracy.

\ifarxiv
\else
\section*{Statement of broader impact}

This work is theoretical, and in this regard we do not expect any direct societal or ethical consequences. It is our hope, however, that by studying the theoretical foundations of neural networks this will eventually help the research community make better and safer learning algorithms.

\comment{Raia said on Twitter than for theory work we could even say 'not applicable', so this short statement should be enough? We may actually use this opportunity to give us a small bonus though.}
\fi

\subsubsection*{Acknowledgements}
The authors would like to thank Claire Orseau for her great help with time allocation, Tor Lattimore for the punctual help and insightful remarks, Andr\'as Gy\"orgy for initiating the Neural Net Readathon, and Ilja Kuzborskij for sharing helpful comments.

\bibliographystyle{abbrvnat}
\bibliography{biblio.bib}

\begin{thebibliography}{14}
\providecommand{\natexlab}[1]{#1}
\providecommand{\url}[1]{\texttt{#1}}
\expandafter\ifx\csname urlstyle\endcsname\relax
  \providecommand{\doi}[1]{doi: #1}\else
  \providecommand{\doi}{doi: \begingroup \urlstyle{rm}\Url}\fi

\bibitem[Allen-Zhu et~al.(2019)Allen-Zhu, Li, and
  Song]{allen-zhu2019convergence}
Z.~Allen-Zhu, Y.~Li, and Z.~Song.
\newblock A convergence theory for deep learning via over-parameterization.
\newblock In \emph{International Conference on Machine Learning}, pages
  242--252, 2019.

\bibitem[Ebendt and Drechsler(2009)]{ebendt2009weighted}
R.~Ebendt and R.~Drechsler.
\newblock Weighted {A∗} search – unifying view and application.
\newblock \emph{Artificial Intelligence}, 173\penalty0 (14):\penalty0 1310 --
  1342, 2009.

\bibitem[Frankle and Carbin(2019)]{frankle2018lottery}
J.~Frankle and M.~Carbin.
\newblock The lottery ticket hypothesis: Finding sparse, trainable neural
  networks.
\newblock In \emph{ICLR}, 2019.

\bibitem[Khan et~al.(2020)Khan, Sohail, Zahoora, and Qureshi]{khan2020convnet}
A.~Khan, A.~Sohail, U.~Zahoora, and A.~S. Qureshi.
\newblock A survey of the recent architectures of deep convolutional neural
  networks.
\newblock \emph{Artificial Intelligence Review}, Apr. 2020.

\bibitem[Livni et~al.(2014)Livni, Shalev-Shwartz, and Shamir]{livni2014}
R.~Livni, S.~Shalev-Shwartz, and O.~Shamir.
\newblock {On the computational efficiency of training neural networks}.
\newblock In \emph{Advances in neural information processing systems}, pages
  855--863, 2014.

\bibitem[Lueker(1998)]{lueker1998partition}
G.~Lueker.
\newblock Exponentially small bounds on the expected optimum of the partition
  and subset sum problems.
\newblock \emph{Random Structures and Algorithms}, 12:\penalty0 51--62, 1998.

\bibitem[Ma et~al.(2018)Ma, Bassily, and Belkin]{belkin-late2018power}
S.~Ma, R.~Bassily, and M.~Belkin.
\newblock The power of interpolation: Understanding the effectiveness of sgd in
  modern over-parametrized learning.
\newblock In \emph{International Conference on Machine Learning}, pages
  3325--3334, 2018.

\bibitem[Malach et~al.(2020)Malach, Yehudai, Shalev-Shwartz, and
  Shamir]{malach2020proving}
E.~Malach, G.~Yehudai, S.~Shalev-Shwartz, and O.~Shamir.
\newblock Proving the lottery ticket hypothesis: Pruning is all you need.
\newblock \emph{arXiv preprint arXiv:2002.00585. To appear in ICML-2020}, 2020.

\bibitem[Pensia et~al.(2020)Pensia, Rajput, Nagle, Vishwakarma, and
  Papailiopoulos]{pensia2020optimal}
A.~Pensia, S.~Rajput, A.~Nagle, H.~Vishwakarma, and D.~Papailiopoulos.
\newblock Optimal lottery tickets via subsetsum: Logarithmic
  over-parameterization is sufficient.
\newblock \emph{arXiv preprint arXiv:2006.07990. To appear in NeurIPS-2020},
  2020.

\bibitem[Ramanujan et~al.(2019)Ramanujan, Wortsman, Kembhavi, Farhadi, and
  Rastegari]{ramanujan2019s}
V.~Ramanujan, M.~Wortsman, A.~Kembhavi, A.~Farhadi, and M.~Rastegari.
\newblock What's hidden in a randomly weighted neural network?
\newblock \emph{arXiv preprint arXiv:1911.13299}, 2019.

\bibitem[Schrittwieser et~al.(2019)Schrittwieser, Antonoglou, Hubert, Simonyan,
  Sifre, Schmitt, Guez, Lockhart, Hassabis, Graepel, Lillicrap, and
  Silver]{schrittwieser2019mastering}
J.~Schrittwieser, I.~Antonoglou, T.~Hubert, K.~Simonyan, L.~Sifre, S.~Schmitt,
  A.~Guez, E.~Lockhart, D.~Hassabis, T.~Graepel, T.~Lillicrap, and D.~Silver.
\newblock Mastering atari, go, chess and shogi by planning with a learned
  model.
\newblock \emph{arXiv preprint arXiv:1911.08265}, 2019.

\bibitem[Ulyanov et~al.(2018)Ulyanov, Vedaldi, and Lempitsky]{ulyanov2018}
D.~Ulyanov, A.~Vedaldi, and V.~Lempitsky.
\newblock Deep image prior.
\newblock In \emph{Proceedings of the IEEE Conference on Computer Vision and
  Pattern Recognition}, pages 9446--9454, 2018.

\bibitem[van~den Oord et~al.(2016)van~den Oord, Dieleman, Zen, Simonyan,
  Vinyals, Graves, Kalchbrenner, Senior, and Kavukcuoglu]{oord2016wavenet}
A.~van~den Oord, S.~Dieleman, H.~Zen, K.~Simonyan, O.~Vinyals, A.~Graves,
  N.~Kalchbrenner, A.~Senior, and K.~Kavukcuoglu.
\newblock Wavenet: A generative model for raw audio, 2016.

\bibitem[Zou and Gu(2019)]{zou2019improved}
D.~Zou and Q.~Gu.
\newblock An improved analysis of training over-parameterized deep neural
  networks.
\newblock In \emph{Advances in Neural Information Processing Systems}, pages
  2053--2062, 2019.

\end{thebibliography}

\newpage
\begin{appendices}
\counterwithin{theorem}{section}

\section{Sub-sums of Uniform Samples}\label[appendix]{sec:uniform_sums}

\begin{figure}
    \centering
    \includegraphics[width=0.75\textwidth]{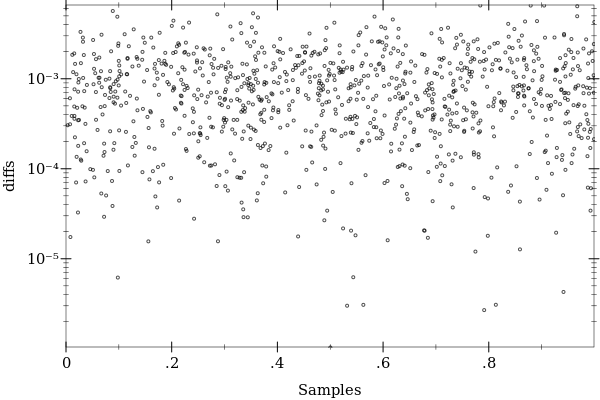}
    \includegraphics[width=0.75\textwidth]{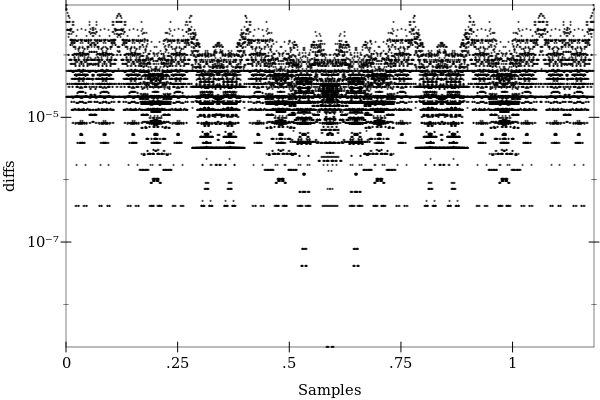}
    \includegraphics[width=0.75\textwidth]{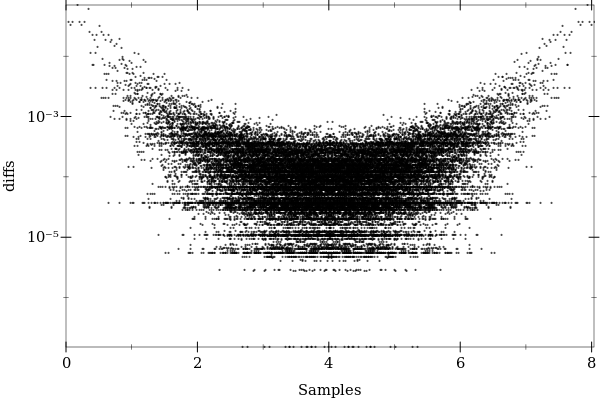}
    \caption{y-axis: Difference between two consecutive points on the x-axis.
    Top: 1000 uniform samples in [0, 1], x-axis is sample value (previous work);
    Middle: 15 hyperbolic samples in [0, 1], each point is one of $2^{15}$ possible sub-sums,
    x-axis is the sub-sums (this work);
    Bottom: Like Middle, but with uniform samples in [0, 1] (future work?).}
    \label{fig:densities}
\end{figure}

Sampling uniformly requires many samples to obtain high precision anywhere in the region of interest
(\cref{fig:densities}, Top).
In this work we have taken advantage the summing function in neurons,
combined with pruning,
so as to be able to consider all $2^k$ sub-sums of $k$ samples
(\cref{fig:densities}, Middle).
However, we conjecture that a similar effect appears with sub-sums of uniform samples
(\cref{fig:densities}, Bottom, but observe the large offset): For example,
it suffices that 2 among $k$ samples $x_1$ and $x_2$ are within $ε$ of each other
so that for all other samples $x_3$, $x_3+x_1$ and $x_3+x_2$ are within $ε$ of each other too.

\section{Variations and Improvements}\label[appendix]{sec:improvements}

\begin{remark}
Sampling from $P_\eps$ is easy using the inverse CDF:
To obtain a sample from $P_\eps$ in $[α, β]$, first draw a uniform sample $u\sim\mathcal{U}[0, 1]$
then return $α(β/α)^u$.
\end{remark}

\begin{remark}
The sampling procedure can be made independent of $ε$ by sampling from 
$P(v)=\ln\frac1{γ}/v(\ln v)^2$ for $0≤v≤γ$ with $γ=\frac23$. 
It is easy to see that $c$ in the proof becomes $(\lnγ/\ln(εγ^2))^2$,
leading to a slightly worse bound $\msim=\tilde{Ω}((\ln\frac1{ε})^2⋅\ln\frac1{δ})$.
For $P(v)\propto [v\ln\frac1{v}(\ln(\ln\frac3{v}))^2]^{-1}$ we get the same bound 
$m=\tilde{Ω}(\ln\frac1{ε}⋅\ln\frac1{δ})$ as in Lemma~\ref{lem:grd}.
\comment{If you want good numerical results, you need to optimize the constants
$a$ and $b$ in $P(v)\propto 1/v(\ln\frac{a}{v})^2$ and $P(v)\propto [v\ln\frac{a}{v}(\ln(\ln\frac{b}{v}))^2]^{-1}$.
Inspecting the graph of the density, maybe $a≈5$ and $b≈15$ are good for interesting ranges of $ε$?}
\end{remark}

\begin{remark}
\XXX
Instead of using batch sampling and \cref{lem:grdM}, we can 
`recycle' samples in a different way, which removes the leading factor 16 
at the expense of a larger second order term.
See \cref{sec:sample_recycling}.
\end{remark}

\begin{remark}
What if $\wmax$ is not known?
A simple trick is to $\wmax=1$ every second sample, every 4th sample we take $\wmax=2$, and every $2^{j+1}$ sample we can take $\wmax=2^j$. Then the total number of samples required   
to obtain $M^*_i$ samples for the correct $\wmax^*$ is at most $4\wmax^* M^*_i$.
Since $\wmax^*$ is expected to be small, this is likely a mild dependency.
A $\log\wmax\log^2 \log\wmax$ factor instead of $\wmax$ can even be obtained with a little more work.
\XXX
\comment{This is not optimal though.
We should use GBS, define a prior over the integers $i$, then take $\wmax=2^i$.
Gives immediately a bound of $O(2M^*/p(i) )$
}
\end{remark}

\begin{remark}
In practice, weights are often initialized uniformly in 
$[-O(\sqrt{1/n}),+O(\sqrt{1/n})]$, where $n$ is the layer width, potentially somehow averaged over two layers, 
i.e.\ weights are initially very small. 
All our initializations need \emph{some} large weights but only very few 
($O(\log n)$ outside this interval), 
most weights are very small too. 
We could even eliminate the large weights and limit our sampling procedure to weights in this interval,
but sample $O(\sqrt{n})$ times more weights to 
reconstruct large weights.
\end{remark}

\begin{remark}
Empirically it seems that even sampling a logarithmic number of weights uniformly from $[-1;+1]$ 
(or $[±1/\sqrt{n}]$, see previous item) or from a standard Gaussian works nearly as well hyperbolic sampling, 
but we were not able to prove this.
\todo{See appendix?}
\comment{L: I'm not sure it's enough to have log bounds though, it may still be polynomial
but better than mere $1/ε$}
\comment{M: I am reasonably confident that it's polylog in $1/ε$, maybe $O(\lnε)^2$.
I spent quite some time on it. That's the problem if you work on the hard problems, that often there is nothing to show for.}
\end{remark}

\begin{remark}
\XXX
Extreme case: Pruning `Boolean' networks
The difficulty of pruning can be seen easily 
for Boolean network with Heaviside transition function and binary inputs, and Boolean weights everywhere.
Then network $G$ would need only twice as many weights as the target network, 
but can still represent exponentially many functions.
It is then clear that not only ``Pruning really is all you need,'' but also that
``Pruning is as hard as learning.''

\todo{Result showing that learning Boolean functions is at least NP-hard}

\comment{O: I (omar) think this is a very (!) interesting thing to tell, that pruning can be as hard as learning. Does it imply (or at least suggest) that for very big networks the combinatorial problem of searching sparse subnetworks will be out of reach?}
\comment{\citet{malach2020proving} already make this point and \citet{ramanujan2019s} have some experimental results}

\todo{L: Can we provide a simple pruning algorithm based on training example 
and show convergence speed? Will likely be exponential though.}
\end{remark}

\section{Technical Results}\label[appendix]{sec:technical}

\textbf{Proof of \cref{thm:propagation}.}
\propagproof

\begin{lemma}[Bound on positive sequences]\label{lem:ax+b_general}
Assuming $x_0 \geq 0$,
and if, for all $t=0, 1, \ldots$, 
$x_{t} \leq a_t x_{t-1} + b_t$
with $a_t \geq 0$ and $b_t \geq 0$, then 
\begin{align*}
\forall \tau &\text{ s.t. } |\{a_t < 1+1/\tau\}_{t\in[T]}| ~\leq~ \tau\,\quad\text{we have}\quad
x_{T} ~\leq~ e\left(x_0 + c\right)\prod_{\substack{t\in[T]\\a_t\geq 1+1/\tau}}^T a_t\,\\
\text{with } c &=  \min\left\{\tau\max_t b_t, ~\max_t \frac{b_t}{a_t-1}\right\}\,.
\end{align*}
\end{lemma}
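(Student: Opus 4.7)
The plan is to iterate the recursion and then exploit the telescoping identity $(a_t - 1)P_t = P_{t-1} - P_t$, where $P_t := \prod_{s=t+1}^T a_s$, to control the accumulated error term after isolating the ``large'' factors $a_t$.

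First I would unroll the recursion to obtain $x_T \leq x_0 P_0 + \sum_{t=1}^T b_t P_t$ with $P_0 = \prod_{t=1}^T a_t$. Next, partition $[T]$ into $S := \{t : a_t < 1+1/\tau\}$ and $L := [T]\setminus S$, so that $|S| \leq \tau$ by hypothesis and $L$ is exactly the index set appearing in the product in the conclusion; let $M := \prod_{t\in L} a_t$. On $S$ the bound $\prod_{t\in S} a_t \leq (1+1/\tau)^{|S|} \leq (1+1/\tau)^\tau \leq e$ gives $P_0 \leq eM$, and hence $x_0 P_0 \leq e x_0 M$. It therefore suffices to show $\sum_t b_t P_t \leq ecM$ for each of the two values of $c$ under consideration.

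For the second variant $c_2 := \max_t b_t/(a_t-1)$, the telescoping identity yields $\sum_{t=1}^T (a_t-1)P_t = P_0 - P_T \leq P_0 \leq eM$, so substituting $b_t \leq c_2(a_t - 1)$ gives $\sum_t b_t P_t \leq c_2 \, eM$ as desired. For the first variant $c_1 := \tau \max_t b_t$, I would reduce to the previous case by replacing each $a_t$ with $a_t' := \max(a_t, 1+1/\tau) \geq a_t$: a straightforward induction on $t$ (using $a_t, x_{t-1} \geq 0$) shows that the corresponding sequence $x_t'$ defined by $x_t' = a_t' x_{t-1}' + b_t$ dominates $x_t$, so $x_T \leq x_T'$. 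In the modified sequence $a_t' - 1 \geq 1/\tau$ for all $t$, hence $b_t/(a_t'-1) \leq \tau \max_t b_t$, while $a_t' = a_t$ on $L$ leaves $M$ unchanged. Applying the telescoping bound to $x_T'$ then yields $x_T \leq e(x_0 + \tau \max_t b_t)M$. Taking the minimum of the two values of $c$ concludes.

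The main obstacle I foresee is that the second variant $c_2$ is only well-defined, and the substitution $b_t \leq c_2(a_t-1)$ only valid, when $a_t > 1$ for all $t$; otherwise $b_t \geq 0$ forces $b_t = 0$ at indices with $a_t \leq 1$. One therefore either reads the max as restricted to $t$ with $a_t > 1$ (and verifies that there is no contribution from the remaining indices), or first applies the reduction to $a_t' \geq 1+1/\tau$ so that all denominators are safely bounded away from zero. This is precisely what the first entry of the minimum achieves, which suggests that the two variants are designed to cover each other's edge cases.
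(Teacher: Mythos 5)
Your proposal is correct and is essentially the paper's argument in expanded form: the unrolled sum together with the telescoping identity $\sum_t (a_t-1)P_t = P_0 - P_T$ is exactly what the paper's recursion $y_t + c = \tilde{a}_t(y_{t-1}+c)$ with $\tilde{a}_t = \max\{a_t, 1+1/\tau\}$ computes, and both arguments hinge on dominating $x_t$ by the modified sequence and on $(1+1/\tau)^{\tau} \leq e$ applied to the at most $\tau$ indices with $a_t < 1+1/\tau$. Your closing remark about the edge case $a_t \leq 1$ (where $\max_t b_t/(a_t-1)$ is ill-defined) is also how the paper resolves it, by collapsing both entries of the minimum into the single well-defined quantity $\max_t b_t/(\tilde{a}_t - 1)$ rather than treating the two cases separately as you do.
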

\begin{proof}
First, observe that $\tau\geq 0$, with $\tau=0$ iff $T=0$.   
Let $\tilde{a}_t = \max\{a_t, 1+1/\tau\}$.
Then
\begin{align*}
    c = \min\left\{\tau\max_t b_t, ~\max_t \frac{b_t}{a_t-1}\right\}
    = \max_t \frac{b_t}{\max\{1/\tau, a_t-1\}}
    = \max_t \frac{b_t}{\tilde{a}_t-1}\,.
\end{align*}
Define $y_{t} = \tilde{a}_t y_{t-1} + (\tilde{a}_t-1)c$ and $y_0 = x_0$.
Then we have $y_{t} + c = \tilde{a}_t \left(y_{t-1} + c\right)$
and so by recurrence $y_{T} + c = (y_0 + c)\prod_{t=1}^T ~a_t$
and thus $y_{T} \leq (y_0 + c)\prod_{t=1}^T \tilde{a}_t$.
Now, observe that $y_{t} \geq \tilde{a}_t y_{t-1} + b_t \geq 0$ and so by recurrence with base case $y_0 = x_0$, $x_{T} \leq y_{T}$.

Furthermore 
\begin{align*}
    \prod_{t\in[T]} \tilde{a}_t ~\leq~
    \prod_{t: a_t < 1+1/\tau}(1+1/\tau) \prod_{t: a_t \geq 1+1/\tau} a_t 
    ~\leq~ (1+1/\tau)^\tau \prod_{t: a_t \geq 1+1/\tau} a_t\,,
\end{align*}
noting that $(1+1/\tau)^\tau\leq e$.
\end{proof}

\begin{remark}
The factor $e$ should be 1 if $\min_t a_t \geq 1+1/\tau$.
\end{remark}
\begin{remark}
$\tau=T$ is always feasible.
\end{remark}
\begin{remark}
If $\min_t a_t \geq 2$, then $\tau=1$ is feasible.
\end{remark}

\begin{corollary}[Feasible $\tau$ for \cref{lem:ax+b_general}]\label{cor:feasibletau}
In the context of \cref{lem:ax+b_general},
for all $x>1$
taking $\tau = \max\{1/(x-1),~|\{a_t < x\}_{t\in[T]}|\}$ is feasible.
\end{corollary}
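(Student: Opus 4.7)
The plan is to verify directly the feasibility condition of \cref{lem:ax+b_general}, namely $|\{a_t < 1+1/\tau\}_{t\in[T]}| \leq \tau$, using the two lower bounds that the proposed $\tau$ satisfies by construction.

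First I would exploit the first term in the $\max$: since $\tau \geq 1/(x-1)$ and $x>1$, we have $1/\tau \leq x-1$, and therefore $1+1/\tau \leq x$. This monotonicity observation immediately yields the set inclusion $\{t\in[T] : a_t < 1+1/\tau\} \subseteq \{t\in[T] : a_t < x\}$, so the cardinality of the former is bounded by the cardinality of the latter. Next I would invoke the second term in the $\max$: $\tau \geq |\{a_t < x\}_{t\in[T]}|$ by definition. Chaining these two inequalities gives
\begin{align*}
    |\{a_t < 1+1/\tau\}_{t\in[T]}| \leq |\{a_t < x\}_{t\in[T]}| \leq \tau,
\end{align*}
which is exactly the feasibility hypothesis needed to apply \cref{lem:ax+b_general}.

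There is no real obstacle here; the only subtlety is making sure that the quantity $1/(x-1)$ is well-defined (guaranteed by $x>1$) and that the $\max$ with the cardinality gives a finite value (trivially true since the cardinality is at most $T$). The corollary is essentially a repackaging of the observation that any $\tau$ dominating both $1/(x-1)$ and the count of small $a_t$'s is feasible, phrased in a form convenient for later applications.
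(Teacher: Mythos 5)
Your proof is correct and follows essentially the same route as the paper: the paper substitutes $y=1/(x-1)$ so that $x=1+1/y$ and then uses exactly the same monotonicity and set-inclusion argument ($\tau\geq y$ implies $1+1/\tau\leq x$, hence $|\{a_t<1+1/\tau\}|\leq|\{a_t<x\}|\leq\tau$). No gaps; nothing further needed.
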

\begin{proof}
Take $y=1/(x-1)$, so $x=1+1/y$ and $\tau = \max\{y,~ |\{a_t < 1+1/y\}_t|\}$.
Thus $|\{a_t < 1+1/\tau\}_t| \leq |\{a_t < 1+1/y\}_t| \leq \tau$ as required.
\end{proof}
\begin{remark}
If $1 < \min_t a_t \leq x$ then taking $\tau = 1/(x-1)$ is feasible.
\end{remark}
\begin{remark}
Taking $\tau=\max\{1, |\{a_t < 2\}_t|\}$ is feasible.
\end{remark}
\begin{remark}
For $\varphi=(1+\sqrt{5})/2\leq 1.62$,
taking $\tau=\max\{\varphi, |\{a_t < \varphi\}_t|\}$ is feasible.
\end{remark}
\begin{remark}
If $\min_t a_t > 1$ then $\tau = 1/(\min_t a_t-1)$ is feasible (but useful only if $\min_t a_t \geq 1+1/T$).
\end{remark}

\begin{corollary}[Simpler bound on positive sequences]\label{lem:ax+b_protected}
Assuming $x_0 = 0$,
and if, for all $t=0, 1, \ldots$, 
$x_{t} \leq a_t x_{t-1} + b_t$
with $a_t \geq 0$ and $b_t \geq 0$, then 
\begin{align*}
x_{T} \leq eT\max_t b_t\prod_{t\in[T]}^T \max\{1, a_t\}\,.
\end{align*}
\end{corollary}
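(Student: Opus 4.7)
The corollary is a direct specialization of Lemma~\ref{lem:ax+b_general}, so the plan is simply to pick the right $\tau$ and loosen the constants. I would begin by recalling what Lemma~\ref{lem:ax+b_general} gives us when $x_0=0$: under the same recurrence hypothesis, we have
\[
x_T \;\leq\; e\,c\,\prod_{\substack{t\in[T]\\ a_t\geq 1+1/\tau}} a_t, \qquad c=\min\!\Big\{\tau\max_t b_t,\;\max_t \tfrac{b_t}{a_t-1}\Big\},
\]
for any $\tau$ satisfying $\bigl|\{t\in[T]:a_t<1+1/\tau\}\bigr|\leq \tau$. So the whole task reduces to selecting $\tau$ and simplifying.

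Next, I would choose $\tau=T$. This choice is always feasible, because the set $\{t\in[T]:a_t<1+1/T\}$ has cardinality at most $T$ trivially. With this choice the first term in the $\min$ defining $c$ gives $c\leq T\max_t b_t$, so the prefactor becomes $e\,T\max_t b_t$, which matches the corollary's statement exactly.

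It remains to loosen the product. Since every term of the form $\max\{1,a_t\}$ is at least $1$, adding back the indices excluded by the condition $a_t\geq 1+1/T$ can only enlarge the product:
\[
\prod_{\substack{t\in[T]\\ a_t\geq 1+1/T}} a_t \;\leq\; \prod_{t\in[T]} \max\{1,a_t\}.
\]
Combining the two inequalities yields the claimed bound. There is no real obstacle here; the only thing to be slightly careful about is verifying feasibility of $\tau=T$ (handled by the trivial cardinality bound) and that passing from the restricted product to $\prod_t\max\{1,a_t\}$ is a genuine relaxation, which is immediate because each added factor is $\geq 1$.
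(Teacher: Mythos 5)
Your proposal is correct and follows essentially the same route as the paper's own proof: invoke \cref{lem:ax+b_general} with $\tau=T$ (always feasible), bound $c\leq T\max_t b_t$ via the first term of the minimum, and relax the restricted product to $\prod_{t\in[T]}\max\{1,a_t\}$. Nothing further is needed.
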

\begin{proof}
Follows from \cref{lem:ax+b_general} with $\tau=T$ which is always feasible
and observing that $\prod_{t:a_t\geq 1+1/T} a_t \leq \prod_t \max\{1, a_t\}$,
and that $c\leq \tau\max_t b_t = T\max_t b_t$.
\end{proof}


\begin{lemma}[Product of weights]\label{lem:prodw}
Let probability densities $P_v(v):=c/v$ for $v∈[a,b]$
and $0<a<b$ with normalization $c:=1/\ln\frac{b}{a}$.
Let weight $w:=v⋅v'$ with $v$ and $v'$ both sampled from $P_v$.
Then $P_w(w)≥c/2w$ for $w∈[a',b']$,
where $P_w$ is the probability density of $w$, 
and $a':=a\sqrt{ab}$ and $b':=b\sqrt{ab}$.
\end{lemma}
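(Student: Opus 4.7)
My plan is to compute the density $P_w$ of $W = VV'$ explicitly by a standard change of variables, and then verify the lower bound directly on the two halves of the interval $[a',b']$ split at $w = ab$.

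First I would perform the substitution $(V,V') \mapsto (U,W)$ with $U = V$ and $W = VV'$, whose Jacobian has absolute value $1/U$. Since $P_v(v) = c/v$ and the two factors are independent, the joint density in the new coordinates becomes $c^2/(UW)$ on the region $\{(u,w) : u \in [a,b],\; w/u \in [a,b]\}$. Marginalising over $u$ then gives
\begin{equation*}
   P_w(w) \;=\; \frac{c^2}{w}\int_{\max\{a,\,w/b\}}^{\min\{b,\,w/a\}} \frac{du}{u}\,.
\end{equation*}

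Next I would split the support $[a^2, b^2]$ of $W$ at $w = ab$: for $w \in [a^2, ab]$ the bounds reduce to $[a, w/a]$, giving $P_w(w) = (c^2/w)\ln(w/a^2)$; for $w \in [ab, b^2]$ the bounds reduce to $[w/b, b]$, giving $P_w(w) = (c^2/w)\ln(b^2/w)$.

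Finally I would check the claimed inequality $P_w(w) \geq c/(2w)$ on $[a', b'] = [a\sqrt{ab},\, b\sqrt{ab}]$. Using $1/c = \ln(b/a)$, the inequality on the left half is equivalent to $\ln(w/a^2) \geq \tfrac12\ln(b/a)$, i.e.\ $w \geq a^{3/2}b^{1/2} = a\sqrt{ab}$, and on the right half it is equivalent to $\ln(b^2/w) \geq \tfrac12\ln(b/a)$, i.e.\ $w \leq a^{1/2}b^{3/2} = b\sqrt{ab}$. Both conditions hold by assumption, and the two explicit formulas agree at the midpoint $w = ab$ (where $\ln(w/a^2) = \ln(b^2/w) = \ln(b/a) = 1/c$, giving $P_w(ab) = c$), so no gluing issue arises. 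There is no genuine obstacle here: the only mild subtlety is getting the Jacobian and the intersected integration range right, after which the verification is purely algebraic.
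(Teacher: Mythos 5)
Your proposal is correct, but it takes a different route from the paper. You compute the density of $W=VV'$ exactly by the change of variables $(V,V')\mapsto(U,W)$ with $U=V$ and marginalising, obtaining the piecewise formula $P_w(w)=\frac{c^2}{w}\ln(w/a^2)$ on $[a^2,ab]$ and $P_w(w)=\frac{c^2}{w}\ln(b^2/w)$ on $[ab,b^2]$, and then verify $P_w(w)\geq c/(2w)$ on $[a',b']$ by elementary algebra; your case analysis of the integration limits and the final inequalities are all correct. The paper instead works in log-space: it observes that $\ln v$ is uniform on $[\ln a,\ln b]$, rescales so that the two logs are uniform on $[-1,1]$, invokes the standard fact that the sum of two independent uniforms is triangularly distributed, and transforms back, bounding the triangular density by $\tfrac12$ on its central half. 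The two arguments are essentially the same convolution seen in different coordinates; yours has the small advantage of exhibiting the exact density (showing in particular that the bound $c/(2w)$ is attained exactly at the endpoints $w=a'$ and $w=b'$), while the paper's is shorter by piggybacking on the known triangular law. One trivial slip: at the midpoint your formulas give $P_w(ab)=c^2\ln(b/a)/(ab)=c/(ab)$, not $c$; this does not affect the argument, since all you need is that the two branches agree there.
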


Note that $w$ may be outside of $[a',b']$, but at least half of the time is inside $[a',b']$.
The lemma implies that the bound in \cref{lem:grd} also applies to the product of two weights,
only getting a factor of 2 worse. Note that the scaling ranges $\frac{b}{a}=\frac{b'}{a'}$ are the same.

\begin{proof}
$\ln v$ is uniformly distributed in $[\ln a, \ln b]$: 
indeed, taking $y=\ln v$, we have $P_y(y)=P_v(v)/\frac{dy}{dv} = c$.
Let us scale and shift this to $t:=c(2\ln v-\ln ab)∈[-1, +1]$ 
and similarly $t':=c(2\ln v'-\ln ab)∈[-1, +1]$.
Then $P_t(t)=P_v(v)/\frac{dt}{dv}=\frac{c}{v}/\frac{2c}{v}=\frac12$ for $t∈[-1, +1]$,
and same for $t'$. 
Let $u:=t+t'∈[-2,+2]$. 
The sum of two uniformly distributed random variables is triangularly distributed:
$P_u(u)=\frac12(1-\frac12|u|)$.
Using $w=v⋅v'$ we can write $u=t+t'=2c(\ln w-\ln(ab))$.
Then $P_w(w)=P_u(u)\frac{du}{dw}=\frac12(1-\frac12|u|)\frac{2c}{w}$.
For $|u|≤1$ this is $≥c/2w$. 
Finally $|u|≤1$ iff $|\ln w-\ln(ab)|≤1/2c$ iff $\ln w\gtreqless\ln ab∓\frac12\ln\frac{b}{a}$ iff $w∈[a',b']$.
\end{proof}

\begin{lemma}[Filling $k$ categories each with at least $n$ samples]\label{lem:grdM}
Let $P_c$ be a categorical distribution of at least $k\in\Naturals$ (mutually exclusive) categories $\{1, 2,\ldots k, \ldots\}$
such that the first $k$ categories have probability at least $c$ and at most $\nicefrac12$, that is, if $X\sim P_c$, then $c\leq P_c(X=j)\leq \nicefrac12$ for all $j\in[k]$.
Let $(X_i)_{i\in[\Msim]}$ be a sequence of $\Msim$ random variables sampled i.i.d. from $P_c$.
For all $\delta\in(0, 1)$,
for all $n \in \Naturals$,
if
\begin{align*}
    \Msim = \ceiling{\frac2c\left(n + \ln \frac{k}{\delta}\right)}
\end{align*}
then with probability at least $1-\delta$
each category $j\in[k]$
contains at least $n$ samples, \ie $|\{X_i = j\}_{i\leq[\Msim]}| \geq n$.
\end{lemma}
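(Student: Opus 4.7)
The plan is to apply a multiplicative Chernoff lower-tail bound to each of the $k$ categories separately and then combine the failure probabilities via a union bound. First I would fix a category $j \in [k]$ and let $Y_j := |\{i \in [\Msim] : X_i = j\}|$; this is a binomial random variable with parameters $\Msim$ and $p_j \in [c, \nicefrac{1}{2}]$. Because the lower-tail probability of a binomial is monotonically non-increasing in the success probability, it suffices to treat the worst case $p_j = c$, so write $\mu := \Msim c$.

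Next I would invoke the standard multiplicative Chernoff bound $P(Y_j \le (1-\eta)\mu) \le \exp(-\eta^2 \mu / 2)$, valid for $\eta \in (0, 1]$, and choose $\eta := 1 - n/\mu$ so that $(1-\eta)\mu = n$ exactly. The prescribed $\Msim = \lceil \tfrac{2}{c}(n + \ln(k/\delta)) \rceil$ guarantees $\mu \ge 2n + 2\ln(k/\delta)$, in particular $\mu \ge 2n$, so $\eta \in [\nicefrac{1}{2}, 1]$ is a legitimate choice. The exponent then satisfies
\[
  \frac{\eta^2 \mu}{2} \;=\; \frac{(\mu - n)^2}{2\mu} \;=\; \frac{\mu}{2} - n + \frac{n^2}{2\mu} \;\ge\; \frac{\mu - 2n}{2} \;\ge\; \ln\frac{k}{\delta},
\]
where the first inequality drops the non-negative term $n^2/(2\mu)$. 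This yields $P(Y_j < n) \le P(Y_j \le n) \le \delta/k$, and a union bound over the $k$ categories gives $P(\exists j \in [k]: Y_j < n) \le \delta$, as required.

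The only real subtlety is matching the constant $2/c$ in the statement of $\Msim$; this is the reason for using the multiplicative Chernoff form (with its $\eta^2/2$ in the exponent) rather than an additive Hoeffding bound, which would give a worse constant. The upper bound $p_j \le \nicefrac{1}{2}$ plays no role in this argument; it appears to be stated only for consistency with the use of the lemma in Step 2' of \cref{thm:relu}, where the $2k$ categories corresponding to the $k$ binary-decomposition intervals of $\hat w^+$ and $\hat w^-$ each carry probability at most $\nicefrac{1}{2}$ since they must all be accommodated simultaneously.
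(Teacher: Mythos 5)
Your proof is correct and follows essentially the same route as the paper's: a per-category binomial lower-tail Chernoff bound, the same arithmetic reducing the exponent to $\Msim c/2 - n \ge \ln(k/\delta)$, and a union bound over the $k$ categories. The only cosmetic difference is that the paper invokes a Chernoff--Hoeffding form with variance factor $\Msim c_j(1-c_j)$ --- which is where it actually uses the assumption $c_j \le \nicefrac12$ --- before relaxing to the same exponent you obtain directly from the plain multiplicative bound.
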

\begin{proof}
Let $c_j\geq c$ be the probability of category $j\in[k]$.
Using the Chernoff-Hoeffding theorem
on the Bernoulli random variable $\indicator{X_i=j}$,
---where $\indicator{test}$ is the indicator function and equals 1 if $test$ is true, 0 otherwise---
with $\Msim c_j-x=n\geq0$, that is, $x=\Msim c_j-n$, 
for each category $j\in[k]$ we have
\begin{align*}
    P\left(\sum_{i=1}^{\Msim} (1-\indicator{X_i=j}) > \Msim(1-c_j)+x\right) &\leq \exp\left(-\frac{x^2}{2\Msim c_j(1-c_j)}\right) \\
    P\left(\sum_{i=1}^{\Msim} \indicator{X_i=j} < \Msim c-x\right) &\leq \exp\left(-\frac{x^2}{2\Msim c_j(1-c_j)}\right) \\
    P\left(\sum_{i=1}^{\Msim} \indicator{X_i=j} < n\right) &\leq \exp\left(-
    (\Msim c_j/2 - n)\right)
\end{align*}
and the condition $(1-c_j)\geq \nicefrac12$ is satisfied. 
Name $E_j$ the event ``the category $j\in[k]$ contains fewer than $n$ samples,'' then 
$P(E_j) \leq \exp(- (\Msim c/2 - n))$.
Then, using a union bound, the probability that any of the $k$ categories contain fewer than $n$ samples is at most
\begin{align*}
    P(E_1\lor E_2\lor\ldots E_k) ~\leq~
    \sum_{j=1}^k P(E_j) ~\leq~
    k\exp\left(- (\Msim c/2 - n)\right)
\end{align*}
and since $\Msim \geq \frac2c\left(n + \ln \frac{k}{\delta}\right)$
\begin{align*}
    P(E_1\lor E_2\lor\ldots E_k) &\leq \delta\,, \\
    1-P(E_1\lor E_2\lor\ldots E_k) &\geq 1-\delta\,,
\end{align*}
which proves the claim.
\end{proof}

\section{Sample recycling}\label[appendix]{sec:sample_recycling}

\begin{theorem}[ReLU sampling bound \#2]\label{thm:relu2}
\cref{thm:relu} holds simultaneously also with 
\begin{align*}
    \Msim_i = \ceiling{2k'\left(n_in_{i-1} + 4\max\{n_i, n_{i-1}\} \ln \frac{2k'\NF}{\delta}\right)}
\end{align*}
and all other quantities are unchanged.
\end{theorem}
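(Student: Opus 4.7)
The plan is to replace the application of \cref{lem:grdM} in Step 2' of the proof of \cref{thm:relu} by a direct bipartite-matching argument over (target weight, category) pairs, saving the multiplicative factor $16 = 8\cdot 2$ (per-category probability slack times the Chernoff population requirement) at the cost of a larger second-order logarithmic term.

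\textbf{Step 1: per-pair probability.} I would reuse Step 1 of \cref{thm:relu} verbatim: for any target weight $\wF_{i,j_1,j_2}$ and any of its $2k$ categories (an interval from \cref{lem:grd} crossed with one of the two sign patterns used for $\hat w^+$ and $\hat w^-$), the product of a sampled intermediate neuron's input edge to $j_2$ and output edge to $j_1$ lies in that category with probability at least $1/(8k')$.

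\textbf{Step 2: matching reformulation.} The ReLU activation forces each intermediate neuron in $\hat G$ to keep at most one active input edge \emph{and} one active output edge (keeping more on both sides would sum activations before the nonlinearity), so in $\hat G$ it can serve at most one (target, category) pair. Form the random bipartite graph with the $M_i$ intermediate neurons on one side and the $2k\cdot n_in_{i-1}$ (target, category) pairs on the other, placing an edge whenever the neuron's candidate product for that target falls in that category. The goal becomes: choose $M_i$ large enough that this graph has a matching saturating the (target, category) side with probability at least $1-\delta/\nlayers$.

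\textbf{Step 3: Hall's condition with endpoint stratification.} The leading term $2k'n_in_{i-1}$ is the information-theoretic minimum (one resource per pair). To upgrade ``every pair has at least one candidate'' into ``every subset $S$ of right-side vertices satisfies $|N(S)|\geq|S|$'', I would stratify subsets by the input/output endpoints they touch in the target layer: any $S$ involves at most $\max\{n_i,n_{i-1}\}$ distinct shared-endpoint groups, and within each group one can apply a Chernoff bound whose outcomes are independent across groups. A union bound over the ``dangerous'' subsets, combined with $\delta_i=\delta/\nlayers$ and $N^*\leq\nlayers\nmax^2$, then yields an overhead of $4\max\{n_i,n_{i-1}\}\ln(2k'N^*/\delta)$, producing exactly the claimed $M_i$. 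The remainder of the proof---applying \cref{thm:propagation} after union-bounding over the $\nlayers$ layers---is identical to Step 3 of \cref{thm:relu}.

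The hard part will be Step 3: a naive ``each pair has at least one candidate'' bound alone would force $M_i=\Omega(8k'\ln N^*)$ \emph{per target weight}, not the much cheaper $2k'$ \emph{per target weight} plus a log overhead that the claimed bound allows. The saving hinges on the observation that with $M_i\approx 2k'n_in_{i-1}$ each (target, category) pair already has roughly $n_in_{i-1}/4$ candidate neurons in expectation, so Hall's condition is satisfied with large slack except for subsets concentrated on a single shared endpoint---which is precisely what the $\max\{n_i,n_{i-1}\}$ factor in the second-order term is engineered to absorb.
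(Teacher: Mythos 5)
Your reduction to a saturating matching in a random bipartite graph is a genuinely different route from the paper, but the step that carries all the difficulty---your Step 3---is not actually an argument. The paper's proof of \cref{thm:relu2} never verifies a Hall-type condition: it runs an explicit sequential recycling procedure (\cref{alg:recycle}). For each target weight it applies \cref{cor:samplemax} to a pool of $\msim=\lceil 8k'\ln(k'/\delta_w)\rceil$ sampled neurons, consumes the at most $2k$ neurons actually used, zeroes out the queried input/output weights of the remaining neurons (each neuron carries only $\min\{n_i,n_{i-1}\}$ independent product samples, so a weight whose value has been examined may not be reused), and tops the pool up with fresh neurons; a deterministic count then gives $\max\{n_i,n_{i-1}\}\msim+2(k-1)n_in_{i-1}$ neurons per layer, and the probabilistic part is just a union bound over the $2\NF$ half-weights, i.e.\ $\delta_w=\delta/(2\NF)$, which is exactly where the $\ln(2k'\NF/\delta)$ factor comes from. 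In your scheme the leading term is not a deterministic count: you must show that a demand of $2k\,n_in_{i-1}$ (target, category) pairs can be matched into a supply of only about $2k'n_in_{i-1}$ neurons, an almost-perfect matching with relative slack of order $1/k'$ plus the second-order term. You give no union-bound computation over violating subsets, no identification of which subsets must be controlled and with what failure probability, and no derivation showing the overhead comes out to $4\max\{n_i,n_{i-1}\}\ln(2k'\NF/\delta)$---that expression is read off the claimed bound rather than produced by the argument.

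Two further points would need repair even to begin Step 3. First, the dependence structure: within one sampled neuron, the candidate products for two targets sharing an input (or output) endpoint share a weight, so precisely on the subsets you flag as dangerous (those concentrated on one endpoint) the edge indicators are correlated \emph{inside} a group; asserting Chernoff bounds that are ``independent across groups'' does not address this, whereas the paper sidesteps it entirely by only ever exploiting disjoint weight pairs and discarding any weight that has been queried. Second, your justification for ``one pair per neuron'' is incorrect as stated: an intermediate ReLU neuron with a single retained input edge may keep several output edges without summing anything before the nonlinearity (only multiple \emph{input} edges are forbidden), so the constraint you encode in the matching is not the architectural one; the constraint the paper actually tracks is statistical (each input or output weight yields one independent product sample). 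The over-restriction is harmless for an upper bound, but it indicates the matching formulation is not capturing what limits reuse. As it stands, the proposal names the right target quantity but leaves the entire probabilistic core unproved.
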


\begin{proof}
\textbf{Step 1 and 2.} Same as for \cref{thm:relu}.

\textbf{Step 2'. Sample recycling.}
Let
\begin{align*}
    \msim = \ceiling{8k' \ln \frac{k'}{\delta_w}}\,, \quad
    k' = \log_{\nicefrac32}\frac{3\wmax}{\eps_w}\,,\quad
    k = \ceiling{\log_{\nicefrac32}\frac{2\wmax}{\eps_w}}\,,
\end{align*}
where $\msim$ is the number of neurons that need to be sampled according to \cref{cor:samplemax}
to $\eps_w$-approximate one target weight with probability at least $1-\delta_w$,
and $k$ is an upper bound on the number of unit bits of the corresponding mask.
One neuron with some pruned weights cannot be shared to approximate
two target weights at the same time, which means
we need at least $2kn_in_{i-1}$ neurons ($k$ for each $\hat{w}^+$, and $k$ for each $\hat{w}^-$).
For a specific target weight $\wF$, out of $\msim\geq 2k$ sampled neurons, only at most $2k$ of them are actually used to approximate the target weight;
all others are `discarded'.
But discarding them is wasteful, because only the product weight on the same input/output
as the target weight has been filtered by \cref{cor:samplemax} (via \cref{lem:grd});
all other product weights are still independent samples since their values have not been queried
by any process.
Each intermediate neuron is connected in input and output with $n_i+n_{i-1}$ weights, but it contains exactly only
$\min\{n_i, n_{i-1}\}$ independent \emph{product} weight samples,
since each input weight and each output weight can be used at most as one independent product weight sample.
\cref{alg:recycle} shows that we can use all of them and,
following the algorithm's notation and the assumption that $n_i \geq n_{i-1}$,
that for each $j$ we only need $\msim + 2(k-1)n_{i-1}$ sampled neurons,
that is, only $n_i \msim + 2(k-1)n_{i}n_{i-1}$ for the whole layer.
To also cover the case $n_{i-1} > n_i$, we need to sample $\max\{n_i,n_{i-1}\} \msim + 2(k-1)n_{i}n_{i-1}$ neurons to ensure that every target weight of layer $i$
can be decomposed into $2k$ product weights, each based on $\msim$ independent product weight samples.

\textbf{Step 3. Network approximation.}
For the guarantee to hold simultaneously over all $\hat{w}^+$ and $\hat{w}^-$,
using a union bound we can take $\delta_w = \delta/(2\NF)$.
Finally the claim follows from \cref{thm:propagation} and noting that $k \leq k'$.
\end{proof}

\begin{remark}
Observe that even though the factor in front of $\msim$ is larger than for \cref{thm:relu},
(also $\nlayers\leadsto \NF$ in the log)
we gain a constant factor 8 in front of the leading term $n_in_{i-1}k$.
\end{remark}

\begin{example}
Under the same conditions as \cref{ex:numeric}, \cref{thm:relu2} gives $\Msim_i/\nmax^2\leq 144$,
and under the same conditions as \cref{ex:numeric_free} we have $\Msim_i/\nmax^2\leq 574$.
\end{example}

Therefore, since both \cref{thm:relu} and \cref{thm:relu2} hold simultaneously, we can take:
\begin{align*}
    M_i = \min&\left\{\ceiling{16k'\left(n_in_{i-1} + \ln\frac{2k'\nlayers}{\delta}\right)},\right.  \\
                &\left.\ \ \ceiling{2k'\left(n_in_{i-1} + 4\max\{n_i, n_{i-1}\}\ln \frac{2k'\NF}{\delta}\right)}\right\}
\end{align*}
to ensure that, with probability at least $1-\delta$,
\begin{align*}
    \sup_{x\in\inputset}\|F(x)-\hat{G}(x)\|_2 \leq \eps\,.
\end{align*}

\begin{algorithm}
\begin{lstlisting}
# Sample recycling at layer i.
for j= 1 to $n_i$:  # Assumes $n_i \geq n_{i-1}$
  # Discard old samples and generate fresh ones.
  M = sample $\msim$ fully-connected intermediate neurons
  for d = 1 to $n_{i-1}$:
    # These indices ensure that
    # * all target weights are approximated,
    # * no input weight and no output weight is used for more 
    #   than one target weight.
    idx_in = d
    idx_out = (d+j) % $n_i$
    $\wF = \WF$[idx_in, idx_out]
    # `Call' to the golden-ratio decomposition ((*\cref{cor:samplemax}*))
    # using the provided samples M.
    # It returns the set K $\subseteq$ M of sampled neurons used to decompose $\wF$.
    # Only uses weights at the indices idx_in and idx_out of the neurons in M.
    # The indexes above ensure that no weight in M already has idx_in and
    # idx_out zeroed out.
    K+ = GRD+(M, idx_in, idx_out, $\wF$)  # (*\cref{cor:samplemax}*) for $\hat{w}^+$
    K- = GRD-(M, idx_in, idx_out, $\wF$)  # (*\cref{cor:samplemax}*) for $\hat{w}^-$
    # These samples cannot be reused for other neurons, put them aside.
    M = M \ (K+ $\cup$ K-)

    # Zero-out the input and output weights that the GRD has filtered, 
    # as they are not independent samples anymore and cannot be reused.
    for n in M:
      n.ins[idx_in] = 0
      n.outs[idx_out] = 0
      
    # Fill up M to have $\msim$ intermediate neurons.
    M_new = sample |K+|+|K-| new independent neurons  # |K+|+|K-| $\leq$ 2k
    M = M $\cup$ M_new  # such that |M| = m
\end{lstlisting}
\caption{Recycling samples. We assume that $n_i\geq n_{i-1}$, otherwise the loops and the increments need to be exchanged.}
\label{alg:recycle}
\end{algorithm}

\newpage
\section{List of Notation}\label[appendix]{app:Notation}

%

\begin{tabular}{ll}
{\bf Symbol }       & {\bf Explanation}                                                                 \\
\hline
$\Naturals$                     & natural numbers $\{1, 2, \ldots\}$                                    \\
$\nlayers∈ℕ$                    & number of network layers                                              \\
$\nvec\in \Naturals^\nlayers$   & vector of the number of neurons                                       \\
$\nmax\in\Naturals$             & maximum number of neurons per layer                                   \\
$i∈[\nlayers]$                  & layer index \comment{$\ell∈[L]$ would be more systematic/autological} \\
$j∈[n_i]$                       & index of $j$th neuron in layer $i$                                    \\
$\Fnn$                          & a target network                                                      \\
$G$                             & the large network to be pruned                                        \\
$\hat{G}$                       & the network $G$ after pruning                                         \\
$\Fmax(\inputset)$              & maximum absolute activation of any non-final neuron on all inputs of interest in $\Fnn$\\
$w∈[-\wmax, \wmax]$             & some weight                                                           \\
$\wmax\in\Reals^+$              & max norm of the weights                                               \\
$\wF∈[-\wmax, \wmax]$           & a weight of the target network $\Fnn$                                 \\
$\WF$                           & weights of the target network                                         \\
$\wa, \wb, \wc, \wD$            & actual individual weights of the network $G$                          \\
$\hat{w}^+, \hat{w}^-, \hat{w}$ & virtual individual weights of the network $\hat{G}$                   \\
$ε>0$                           & output accuracy                                                       \\
$1-δ\in[0, 1]$                  & high probability                                                      \\
$\sigma:ℝ→ℝ$                    & activation function                                                   \\
$\activvec$                     & vector of $\nlayers$ activation functions                             \\
$λ_i$                           & Lipschitz factor of $\activ_i$                                        \\
$k∈ℕ$                           & number of `bits' to represent a weight                                \\
$\msim∈ℕ$                       & number of neurons sampled per target weight                           \\
$\Msim∈ℕ$                       & number of neurons sampled per intermediate layer                      \\
$x∈[-\xmax, \xmax]^{n_0}$       & network input                                                         \\
$\xmax\in\Reals^+$              & max norm of the inputs                                                \\
$P$                             & probability                                                           \\
$A(\nlayers, \nvec, \activvec)$ & architecture of a network                                             \\
$\v v$                          & vector                                                                \\
$\vmask$                        & binary mask vector                                                    \\
$\mask$                         & binary mask                                                           \\
$\Fnn_i$                        & output of layer $i$ of the target network given network inputs        \\
$G$                             & the big network $i$                                                   \\
$\hat{G}_i$                     & subnetwork of the big network, approximating $\Fnn$                   \\
$\fnn_i$                        & layer functions of target network given layer inputs                   \\
$\hat{g}_i$                     & same as $f_i$ for $\hat{G}$                                           \\
$P_\eps$                        & $1/v$ distribution                                                    \\
$\pwplus$                       & $1/v$ distribution                                                    \\
$\pw$                           & $±1/v$ distribution                                                   \\
$\pwprod$                       & $1/v$ product distribution                                            \\
$\pab$                          & product distribution of $\wa$ and $\wb$                               \\
$\pcd$                          & product distribution of $\wc$ and $\wD$                               \\
\end{tabular} 

\end{appendices}

\end{document}